\def\usearxivstyle{1}
\long\def\@makecaption#1#2{
  \vskip 0.8ex
  \setbox\@tempboxa\hbox{\small {\bf #1:} #2}
  \parindent 1.5em  
  \dimen0=\hsize
  \advance\dimen0 by -3em
  \ifdim \wd\@tempboxa >\dimen0
  \hbox to \hsize{
    \parindent 0em
    \hfil 
    \parbox{\dimen0}{\def\baselinestretch{0.96}\small
      {\bf #1.} #2
    } 
    \hfil}
  \else \hbox to \hsize{\hfil \box\@tempboxa \hfil}
  \fi
}
\DeclareMathOperator*{\argmin}{arg\,min}
\newtheorem{theorem}{Theorem}[section]
\newtheorem{lemma}[theorem]{Lemma}
\newtheorem{example}[theorem]{Example}
\newtheorem{proposition}[theorem]{Proposition}
\newtheorem{corollary}[theorem]{Corollary}
\newcommand{\tengyu}[1]{}
\newcommand{\tnote}[1]{}
\newcommand{\pl}[1]{}
\newcommand{\ak}[1]{}
\begin{document}

\abovedisplayskip=8pt plus0pt minus3pt
\belowdisplayskip=8pt plus0pt minus3pt

\begin{center}
  {\LARGE Understanding Self-Training for Gradual Domain Adaptation} \\
  \vspace{.5cm}
  {\Large Ananya Kumar ~~~~ Tengyu Ma ~~~~ Percy Liang} \\
  \vspace{.2cm}
  {\large Stanford University} \\
  Department of Computer Science \\
  \vspace{.2cm}
  \texttt{\{ananya,tengyuma,pliang\}@cs.stanford.edu}
\end{center}

\begin{abstract}
\noindent Machine learning systems must adapt to data distributions that evolve over time, in applications ranging from sensor networks and self-driving car perception modules to brain-machine interfaces. We consider gradual domain adaptation, where the goal is to adapt an initial classifier trained on a source domain given only unlabeled data that shifts gradually in distribution towards a target domain. We prove the first non-vacuous upper bound on the error of self-training with gradual shifts, under settings where directly adapting to the target domain can result in unbounded error. The theoretical analysis leads to algorithmic insights, highlighting that regularization and label sharpening are essential even when we have infinite data, and suggesting that self-training works particularly well for shifts with small Wasserstein-infinity distance. Leveraging the gradual shift structure leads to higher accuracies on a rotating MNIST dataset and a realistic Portraits dataset.
\end{abstract}

\section{Introduction}
\label{introduction}

Machine learning models are typically trained and tested on the same data distribution.
However, when a model is deployed in the real world, the data distribution typically evolves over time, leading to a drop in performance.
This problem is widespread: sensor measurements drift over time due to sensor aging~\cite{vergara2012Chemical}, self-driving car vision modules have to deal with evolving road conditions~\cite{bobu2018adapting}, and neural signals received by brain-machine interfaces change within the span of a day~\cite{farshchian2019adversarial}.
Repeatedly gathering large sets of labeled examples to retrain the model can be impractical, so we would like to leverage unlabeled examples to adapt the model to maintain high accuracy~\cite{farshchian2019adversarial, sethi2017reliable}.

In these examples the domain shift doesn't happen at one time, but happens gradually, although this gradual structure is ignored by most domain adaptation methods.
Intuitively, it is easier to handle smaller shifts, but for each shift we can incur some error so the more steps, the more degradation---making it unclear whether leveraging the gradual shift structure is better than directly adapting to the target.

In this paper, \emph{we provide the first theoretical analysis showing that gradual domain adaptation provides improvements over the traditional approach of direct domain adaptation}. We analyze self-training (also known as pseudolabeling), a method in the semi-supervised learning literature~\cite{chapelle2006semisupervised} that has led to state-of-the-art results on ImageNet~\cite{xie2020selftraining} and adversarial robustness on CIFAR-10~\cite{uesato2019are, carmon2019unlabeled, najafi2019robustness}. 

\begin{figure}[t]
\begin{center}
\ifdefined\usearxivstyle
\centerline{\includegraphics[width=0.6\columnwidth]{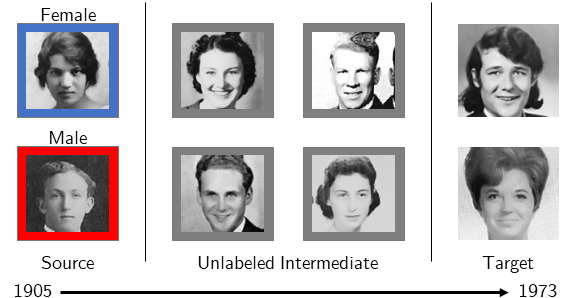}}
\else
\centerline{\includegraphics[width=\columnwidth]{images/portraits_example.png}}
\fi
\caption{In gradual domain adaptation we are given labeled data from a source domain, and unlabeled data from intermediate domains that shift gradually in distribution towards a target domain. Here, blue = female, red = male, and gray = unlabeled data.}
\label{fig:portraits_example}
\end{center}
\vskip -0.3in
\end{figure}

As a concrete example of our setting, the Portraits dataset~\cite{ginosar2017portraits} contains photos of high school seniors taken across many years, labeled by gender (Figure~\ref{fig:portraits_example}).
We use the first 2000 images (1905 - 1935) as the source, next 14000 (1935 - 1969) as intermediate domains, and next 2000 images as the target (1969 - 1973). A model trained on labeled examples from the source gets 98\% accuracy on held out examples in the same years, but only 75\% accuracy on the target domain. Assuming access to \emph{unlabeled} images from intermediate domains, our goal is to adapt the model to do well on the target domain. Direct adaptation to the target with self-training only improves the accuracy a little, from 75\% to 77\%.

\pl{make it clearer that we're proposing this or it has been studied (in which case we need a cite)}
The gradual self-training algorithm begins with a classifier $w_0$ trained on labeled examples from the source domain (Figure~\ref{fig:intro_timestep0}). For each successive domain $P_t$, the algorithm generates pseudolabels for unlabeled examples from that domain, and then trains a regularized supervised classifier on the pseudolabeled examples. The intuition, visualized in Figure~\ref{fig:grad_st_intuition}, is that after a single gradual shift, most examples are pseudolabeled correctly so self-training learns a good classifier on the shifted data, but the shift from the source to the target can be too large for self-training to correct. \emph{We find that gradual self-training on the Portraits dataset improves upon direct target adaptation (77\% to 84\% accuracy)}.

\begin{figure}

     \begin{center}
     \hfill
     \ifdefined\usearxivstyle
     \begin{subfigure}[b]{0.14\columnwidth}
     \else
     \begin{subfigure}[b]{0.24\columnwidth}
     \fi
         \centering
         \includegraphics[width=\textwidth]{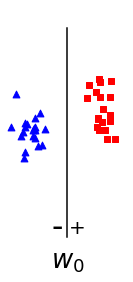}
         \caption{t = 0}
         \label{fig:intro_timestep0}
     \end{subfigure}
     \hfill
     \ifdefined\usearxivstyle
     \begin{subfigure}[b]{0.14\columnwidth}
     \else
     \begin{subfigure}[b]{0.24\columnwidth}
     \fi
         \centering
         \includegraphics[width=\textwidth]{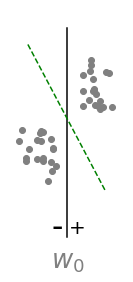}
         \caption{t = 1}
         \label{fig:intro_timestep1}
     \end{subfigure}
     \hfill
     \ifdefined\usearxivstyle
     \begin{subfigure}[b]{0.14\columnwidth}
     \else
     \begin{subfigure}[b]{0.24\columnwidth}
     \fi
         \centering
         \includegraphics[width=\textwidth]{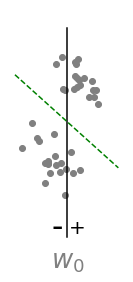}
         \caption{t = 2}
         \label{fig:intro_timestep2}
     \end{subfigure}
     \hfill
     \ifdefined\usearxivstyle
     \begin{subfigure}[b]{0.14\columnwidth}
     \else
     \begin{subfigure}[b]{0.24\columnwidth}
     \fi
         \centering
         \includegraphics[width=\textwidth]{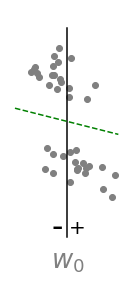}
         \caption{t = 3}
         \label{fig:intro_timestep3}
     \end{subfigure}
     \hfill
     \hfill
     \caption{
     The source classifier $w_0$ gets 100\% accuracy on the source domain (Figure~\ref{fig:intro_timestep0}), where we have labeled data.
     But after 3 time steps (Figure~\ref{fig:intro_timestep3}) the source classifier is stale, classifying most examples incorrectly.
     Now, we cannot correct the classifier using unlabeled data from the target domain, which corresponds to traditional domain adaptation directly to the target.
     Given \emph{unlabeled} data in an intermediate domain (Figure~\ref{fig:intro_timestep1}) where the shift is gradual, the source classifier pseudolabels most points correctly, and self-training learns an accurate classifier (show in green) that separates the classes.
     Successively applying self-training learns a good classifier on the target domain (green classifier in Figure~\ref{fig:intro_timestep3}).
     }
	\label{fig:grad_st_intuition}
	\end{center}
	\vskip -0.2in
\end{figure}

\textbf{Our results}: We analyze gradual domain adaptation in two settings.
The key challenge for domain adaptation theory is dealing with source and target domains whose support do not overlap~\cite{zhao2019zhao, shu2018dirtt}, which are typical in the modern high-dimensional regime.
The gradual shift structure inherent in many applications provides us with leverage to handle adapting to target distributions with non-overlapping support.

Our first setting, the margin setting, is distribution-free---we only assume that at every point in time there exists some linear classifier that can classify most of the data correctly with a margin, where the linear classifier may be different at each time step (so this is more general than covariate shift), and that the shifts are small in Wasserstein-infinity distance.
A simple example (as in Figure~\ref{fig:grad_st_intuition}) shows that a classifier that gets 100\% accuracy can get 0\% accuracy after a constant number of time steps.
Directly adapting to the final target domain also gets 0\% accuracy.
Gradual self-training does better, letting us bound the error after $T$ steps: $\mbox{err}_T \le e^{cT} (\alpha_0 + O(1/\sqrt{n}))$, where $\alpha_0$ is the error of the classifier on the source domain, and $n$ is the number of unlabeled examples in each intermediate domain.
While this bound is exponential in $T$, this bound is non-vacuous for small $\alpha_0$, and we show that this bound is tight for gradual self-training.

In the second setting, stronger distributional assumptions allow us to do better---we assume that $P(X \mid Y = y)$ is a $d$-dimensional isotropic Gaussian for each $y$. Here, we show that if we begin with a classifier $w_0$ that is nearly Bayes optimal for the initial distribution, we can recover a classifier $w_T$ that is Bayes optimal for the target distribution with infinite unlabeled data.
This is an idealized setting to understand what properties of the data might allow self-training to do better than the exponential bound.

\emph{Our theory leads to practical insights, showing that regularization---even in the context of infinite data---and label sharpening are essential} for gradual self-training. Without regularization, the accuracy of gradual self-training drops from 84\% to 77\% on Portraits and 88\% to 46\% on rotating MNIST. Even when we self-train with more examples, the performance gap between regularized and unregularized models stays the same---unlike in supervised learning where the benefit of regularization diminishes as we get more examples.
\pl{I was expecting some results for label sharpening; people might not even know what label sharpening is...}
\ak{Ah I see, it seems like a slightly less major point, and hard to define---what do you think?}
\pl{just thinking about it, label sharpening is a bit of a misnomer - we're sharpening the distribution over labels, not the labels,
but I don't have any other better ideas}

Finally, our theory suggests that the gradual shift structure helps when the shift is small in Wasserstein-infinity distance as opposed to other distance metrics like the KL-divergence.
For example, one way to interpolate between the source and target domains is to gradually introduce more images from the target, but this shift is large in Wasserstein-infinity distance---we see experimentally that gradual self-training does not help in this setting.
We hope this gives practitioners some insight into when gradual self-training can work.

\section{Setup}
\label{setup}

\newcommand{\dist}{\ensuremath{\rho}}
\newcommand{\wasser}{\ensuremath{W_{\infty}}}
\newcommand{\xspace}{\ensuremath{\mathbb{R}^d}}
\newcommand{\yspace}{\ensuremath{\{-1, 1\}}}
\newcommand{\error}{\ensuremath{\mbox{Err}}}
\newcommand{\E}[0]{\ensuremath{\mathop{\mathbb{E}}}}
\newcommand{\prob}[0]{\ensuremath{\mathbb{P}}}
\newcommand{\sign}{\ensuremath{\mbox{sign}}}
\newcommand{\selftrain}{\ensuremath{\textup{ST}}}
\newcommand{\linearmodels}{\ensuremath{\Theta}}

\newcommand{\marginphi}{\ensuremath{\phi}}
\newcommand{\ramp}{\ensuremath{r}}
\newcommand{\hinge}{\ensuremath{h}}
\newcommand{\rampl}{\ensuremath{\ell_r}}
\newcommand{\hingel}{\ensuremath{\ell_h}}
\newcommand{\phil}{\ensuremath{\ell_{\phi}}}
\newcommand{\rampL}{\ensuremath{L_r}}
\newcommand{\hingeL}{\ensuremath{L_h}}
\newcommand{\phiL}{\ensuremath{L_{\phi}}}
\newcommand{\reg}{\ensuremath{R}}
\newcommand{\normal}{\ensuremath{\mathcal{N}}}

\textbf{Gradually shifting distributions:} Consider a binary classification task of predicting labels $y \in \yspace$ from input features $x \in \xspace$. 
We have joint distributions over the inputs and labels, $\xspace \times \yspace$: $P_0, P_1, \ldots, P_T$, where $P_0$ is the source domain, $P_T$ is the target domain, and $P_1, \ldots, P_{T-1}$ are intermediate domains.
We assume the shift is gradual: for some $\epsilon > 0$, $\dist(P_t, P_{t+1}) < \epsilon$ for all $0 \leq t < T$, where $\dist(P, Q)$ is some distance function between distributions $P$ and $Q$.
We have $n_0$ labeled examples $S_0 = \{ x^{(0)}_i, y^{(0)}_i \}_{i=1}^{n_0}$ sampled independently from the source $P_0$ and $n$ unlabeled examples $S_t = \{ x^{(t)}_i \}_{i=1}^{n}$ sampled independently from $P_t$ for each $1 \leq t \leq T$.

\textbf{Models and objectives:}
We have a model family $\Theta$, where a model $M_{\theta}: \xspace \to \mathbb{R}$ outputs a score representing its confidence that the label $y$ is 1 for the given example.
The model's prediction for an input $x$ is $\sign(M_{\theta}(x))$, where $\sign(r) = 1$ if $r \geq 0$ and $\sign(r) = -1$ if $r < 0$.
We evaluate models on the fraction of times they make a wrong prediction, also known as the $0$-$1$ loss:
\begin{equation} \error(\theta, P) = \E_{X, Y \sim P} [ \sign(M_{\theta}(X)) \neq Y ] \end{equation}
The goal is to find a classifier $\theta$ that gets high accuracy on the target domain $P_T$---that is, low $\error(\theta, P_T)$.
In an online setting we may care about the accuracy at the current $P_t$ for every time $t$, and our analysis works in this setting as well.

\textbf{Baseline methods:}
We select a loss function $\ell : \mathbb{R} \times \yspace \to \mathbb{R}^+$ which takes a prediction and label, and outputs a non-negative loss value,
and we begin by training a source model $\theta_0$ that minimizes the loss on labeled data in the source domain:
\begin{equation} \theta_0 = \argmin_{\theta' \in \Theta} \frac{1}{n_0} \sum_{(x_i, y_i) \in S_0} \ell( M_{\theta'}(x_i), y_i ) \end{equation}

The \emph{non-adaptive baseline} is to use $\theta_0$ on the target domain, which incurs error $\error(\theta_0, P_T)$.
\emph{Self-training} uses unlabeled data to adapt a model.
Given a model $\theta$ and unlabeled data $S$, $\selftrain(\theta, S)$ denotes the output of self-training.
Self-training pseudolabels each example in $S$ using $M_{\theta}$, and then selects a new model $\theta'$ that minimizes the loss on this pseudolabeled dataset.
Formally,
\begin{equation} \label{eqn:selfTrainSample} \selftrain(\theta, S) = \argmin_{\theta' \in \Theta} \frac{1}{|S|} \sum_{x_i \in S} \ell( M_{\theta'}(x_i), \sign(M_{\theta}(x_i)) ) \end{equation}

Here, self-training uses ``hard" labels: we pseudolabel examples as either $-1$ or $1$, based on the output of the classifier, instead of a probabilistic label based on the model's confidence---we refer to this as \emph{label sharpening}.
In our theoretical analysis, we sometimes want to describe the behavior of self-training when run on infinite unlabeled data from a probability distribution $P$:
\begin{equation} \label{eqn:selfTrainPop} \selftrain(\theta, P) = \argmin_{\theta' \in \Theta} \E_{X \sim P} [ \ell( M_{\theta'}(X), \sign(M_{\theta}(X)) ) ] \end{equation}

The \emph{direct adaptation to target} baseline takes the source model $\theta_0$ and self-trains on the target data $S_T$, and is denoted by $\selftrain(\theta_0, S_T)$. Prior work often chooses to repeat this process of self-training on the target $k$ times, which we denote by $\selftrain_k(\theta_0, S_T)$.

\textbf{Gradual self-training:} In gradual self-training, we self-train on the finite unlabeled examples from each domain successively. That is, for $i \geq 1$, we set:
\begin{equation} \theta_{i} = \selftrain(\theta_{i-1}, S_i) \end{equation}
$\selftrain(\theta_0, (S_1, \ldots, S_T)) = \theta_T$ is the output of gradual self-training, which we evaluate on the target distribution $P_T$.

\pl{should we admit that GST actually uses more data, but that we can adjust? I'm on the fence about not complicating things but also pre-empting readers' worries}
\ak{Would prefer not complicating things here!}

\section{Theory for the margin setting}
\label{sec:margin_theory}

We show that gradual self-training does better than directly adapting to the target, where we assume that at each time step there exists some linear classifier---which can be different at each step---that can classify most of the data correctly with a margin (a standard assumption in learning theory), and that the shifts are small. Our main result (Theorem~\ref{thm:gradualSelfTrain}) bounds the error of gradual self-training.
We show that our analysis is tight for gradual self-training (Example~\ref{ex:selfTrainingExponential}), and explain why regularization, label sharpening, and the ramp loss, are key to our bounds.
Proofs are in Appendix~\ref{sec:appendix_margin_theory}.

\subsection{Assumptions}

\textbf{Models and losses:} We consider regularized linear models that have weights with bounded $\ell_2$ norm: $\linearmodels_{\reg} = \{ (w, b) : w \in \mathbb{R}^d, b \in \mathbb{R}, \|w\|_2 \leq \reg \}$ for some fixed $\reg > 0$. Given $(w, b) \in \linearmodels_{\reg}$, the model's output is $M_{w, b}(x) = w^{\top}x + b$.

We consider margin loss functions such as the hinge and ramp losses. Intuitively, a margin loss encourages a model to classify points correctly and confidently---by keeping correctly classified points far from the decision boundary. We consider the hinge function $\hinge$ and ramp function $\ramp$:
\begin{align}
\hinge(m) &= \max(1 - m, 0) \\
\ramp(m) &= \min(\hinge(m), 1)
\end{align}
The ramp loss is $\rampl(\hat{y}, y) = \ramp(y \hat{y})$, where $\hat{y} \in \mathbb{R}$ is a model's prediction, and $y \in \{-1, 1\}$ is the true label.
The hinge loss is the standard way to enforce margin, but the ramp loss is more robust towards outliers because it is bounded above---no single point contributes too much to the loss.
We will see that the ramp loss is key to the theoretical guarantees for gradual self-training because of its robustness.
We denote the population ramp loss as:
\begin{equation} \rampL(\theta, P) = \E_{X, Y \sim P}[\rampl(M_{\theta}(X), Y)] \end{equation}
\tengyu{would it be easier if we just write $\phi(M(X)Y)$}
\ak{Ah I think this will be inconsistent with the self-training algorithm definition though, in the setup}
Given a finite sample $S$, the empirical loss is:
\begin{equation} \rampL(\theta, S) = \frac{1}{|S|} \sum_{x, y \in S}[\rampl(M_{\theta}(x), y)] \end{equation}

\textbf{Distributional distance:} Our notion of distance is $W_{\infty}$, the Wasserstein-infinity distance.
Intuitively, $W_{\infty}$ moves points from distribution $P$ to $Q$ by distance at most $\epsilon$ to match the distributions.
For ease of exposition we consider the Monge form of $W_{\infty}$, although the results can be extended to the Kantarovich formulation as well.
Formally, given probability measures $P, Q$ on $\mathcal{X}$:
\begin{align}
\wasser(P, Q) = \inf \{ &\sup_{x \in \xspace} ||f(x) - x||_2 : \nonumber\\
&f : \xspace \to \xspace, f_\#P = Q\}
\end{align}

As usual, $\#$ denotes the push-forward of a measure, that is, for every set $A \subseteq \xspace$, $f_\#P(A) = P(f^{-1}(A))$.

In our case, we require that the conditional distributions do not shift too much. Given joint probability measures $P, Q$ on the inputs and labels $\xspace \times \yspace$, the distance is:
\begin{align}
\dist(P, Q) = \max(&\wasser(P_{X \mid Y=1}, Q_{X \mid Y=1}), \nonumber\\
&\wasser(P_{X \mid Y=-1}, Q_{X \mid Y=-1})).
\end{align}

\newcommand{\sepAssump}{$\alpha^*$-separation}
\newcommand{\noLabShiftAssump}{no label shift}
\newcommand{\NoLabShiftAssump}{No label shift}
\newcommand{\gradShiftAssump}{gradual shift}
\newcommand{\GradShiftAssump}{Gradual shift}
\newcommand{\boundedAssump}{bounded data}
\newcommand{\BoundedAssump}{Bounded data}

\textbf{\sepAssump{} assumption}: Assume every domain admits a classifier with low loss $\alpha^*$, that is there exists $\alpha^* \geq 0$ and for every domain $P_t$, there exists some $\theta_t \in \linearmodels_{\reg}$ with $\rampL(\theta_t, P_t) \leq \alpha^*$.

\textbf{\GradShiftAssump{} assumption}: For some $\rho < \frac{1}{\reg}$, assume $\dist(P_t, P_{t+1}) \leq \rho$ for every consecutive domain, where $\frac{1}{\reg}$ is the regularization strength of the model class $\linearmodels_{\reg}$. $\gamma = \frac{1}{\reg}$ can be interpreted as the geometric margin (distance from decision boundary to data) the model is trying to enforce.

\textbf{\BoundedAssump{} assumption}: When dealing with finite samples we need a standard regularity condition: we say that $P$ satisfies the \emph{bounded data assumption} if the data is not too large on average: $\E_{X \sim P}[ ||X||_2^2 ] \leq B^2$ where $B > 0$.

\textbf{\NoLabShiftAssump{} assumption}: Assume that the fraction of $Y=1$ labels does not change: $P_t(Y)$ is the same for all $t$. 

\subsection{Domain shift: baselines fail}

While the distribution shift from $P_t$ to $P_{t+1}$ is small, the distribution shift from the source $P_0$ to the target $P_T$ can be large, as visualized in Figure~\ref{fig:grad_st_intuition}.
A classifier that gets 100\% accuracy on $P_0$, might classify every example wrong on $P_T$, even if $T \geq 2$.
In this case, directly adaptating to $P_T$ would not help.
The following example formalizes this:

\newcommand{\baselinesFailText}{
  Even under the \sepAssump, \noLabShiftAssump, \gradShiftAssump, and \boundedAssump{} assumptions, there exists distributions $P_0, P_1, P_2$ and a source model $\theta \in \linearmodels_{\reg}$ that gets $0$ loss on the source ($\rampL(\theta, P_0) = 0$), but high loss on the target: $\rampL(\theta, P_2) = 1$. Self-training directly on the target does not help: $\rampL(\selftrain(\theta, P_2), P_2) = 1$. This holds true even if every domain is separable, so $\alpha^* = 0$.
}

\begin{example}
\label{ex:baselinesFail}
\baselinesFailText{}
\end{example}

\textbf{Other methods}: Our analysis focuses on self-training, but other bounds do not apply in this setting because they either assume that the density ratio between the target and source exists and is not too small~\cite{huang2006correcting}, or that the source and target are similar enough that we cannot discriminate between them~\cite{ben2010theory}.


\subsection{Gradual self-training improves error}

We show that gradual self-training helps over direct adaptation.
For intuition, consider a simple example where $\alpha^* = 0$ and $\theta_0$ classifies every example in $P_0$ correctly with geometric margin $\gamma = \frac{1}{R}$.
If each point shifts by distance $< \gamma$, $\theta_0$ gets every example in the new domain $P_1$ correct.
If we had infinite unlabeled data from $P_1$, we can learn a model $\theta'$ that classifies every example in the new domain $P_1$ correctly with margin $\gamma$ since $\alpha^* = 0$.
Repeating the process for $P_2, \ldots, P_T$, we get every example in $P_T$ correct.

But what happens when we start with a model that has some error, for example because the data cannot be perfectly separated, and have only finite unlabeled samples?
We show that self-training still does better than adapting to the target domain directly, or using the non-adaptive source classifier.

The first main result of the paper says that if we have a model $\theta$ that gets low loss and the distribution shifts slightly, self-training gives us a model $\theta'$ that does not do too badly on the new distribution.

\newcommand{\gradualSelfTrainTheoremText}{
Given $P, Q$ with $\dist(P, Q) = \rho < \frac{1}{\reg}$ and marginals on $Y$ are the same so $P(Y) = Q(Y)$. Suppose $P, Q$ satisfy the \boundedAssump{} assumption, and we have initial model $\theta$, and $n$ unlabeled samples $S$ from $Q$, and we set $\theta' = \selftrain(\theta, S)$. Then with probability at least $1 - \delta$ over the sampling of $S$, letting $\alpha^* = \min_{\theta^* \in \linearmodels_{\reg}} \rampL(\theta^*, Q)$:
\begin{align}
\rampL(\theta', Q) \leq &\frac{2}{1 - \rho \reg} \rampL(\theta, P) + \alpha^* \nonumber\\
&+ \frac{4B \reg + \sqrt{2 \log{2 / \delta}}}{\sqrt{n}}
\end{align}
}

\begin{theorem}
\label{thm:gradualSelfTrain}
\gradualSelfTrainTheoremText{}
\end{theorem}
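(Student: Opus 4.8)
The plan is to split the argument into a population-level inequality and a finite-sample concentration step, routing everything through the pseudolabel $0$-$1$ error of the initial model $\theta$ on the new domain $Q$. Throughout, write $\tilde{y}(x) = \sign(M_\theta(x))$ for the pseudolabel, and abbreviate the population pseudolabel loss of a model $\theta''$ by $A(\theta'') = \E_{X \sim Q}[\ramp(\tilde{y}(X)\, M_{\theta''}(X))]$. Self-training returns $\theta'$ minimizing the \emph{empirical} pseudolabel loss, so on the sample $\theta'$ beats the feasible comparator $\theta^*$, the true-loss minimizer on $Q$ achieving $\alpha^*$; uniform convergence will later upgrade this to $A(\theta') \le A(\theta^*) + (\text{concentration})$, while at the population level it reads simply $A(\theta') \le A(\theta^*)$.

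First I would establish the population core, namely $\rampL(\theta',Q) \le \alpha^* + 2\,\prob_{X,Y\sim Q}[\tilde{y}(X)\ne Y]$. Two one-line case splits do this. Splitting on whether the pseudolabel is correct, and using $\ramp \le 1$ on the disagreement set while the true-label and pseudolabel ramp losses coincide on the agreement set, gives $\rampL(\theta',Q) \le A(\theta') + \prob_{X,Y\sim Q}[\tilde{y}(X)\ne Y]$. The same split applied to the comparator gives $A(\theta^*) \le \rampL(\theta^*,Q) + \prob_{X,Y\sim Q}[\tilde{y}(X)\ne Y] = \alpha^* + \prob_{X,Y\sim Q}[\tilde{y}(X)\ne Y]$. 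Chaining these with $A(\theta') \le A(\theta^*)$ yields the claim.

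The crux, and the step I expect to be the main obstacle, is the key lemma $\prob_{X,Y\sim Q}[\tilde{y}(X)\ne Y] \le \frac{1}{1-\rho\reg}\,\rampL(\theta,P)$, which is where the geometry of $\wasser$, the no-label-shift assumption, and the robustness of the ramp loss all enter. I would use the Monge maps witnessing $\dist(P,Q)=\rho$ together with $P(Y)=Q(Y)$ to build a coupling of $(X,Y)\sim P$ and $(X',Y)\sim Q$ that preserves the label and moves the input by $\|X'-X\|_2\le\rho$. Since $|M_\theta(X')-M_\theta(X)| = |w^\top(X'-X)| \le \reg\rho$, any point misclassified under $Q$ must have $P$-margin $Y\,M_\theta(X)\le \reg\rho$. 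The final ingredient is that the ramp function lower-bounds this margin indicator after the shift: for $m \le \reg\rho$ one has $\ramp(m)\ge \ramp(\reg\rho)=1-\reg\rho$, so $\mathbf{1}[m\le \reg\rho] \le \ramp(m)/(1-\reg\rho)$. Taking expectations over the coupling converts the misclassification probability into $\rampL(\theta,P)/(1-\rho\reg)$, producing exactly the factor $\frac{1}{1-\rho\reg}$. Combined with the population core, this gives the deterministic part $\alpha^* + \frac{2}{1-\rho\reg}\rampL(\theta,P)$.

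Finally I would supply the $O(1/\sqrt{n})$ term by uniform convergence of the empirical pseudolabel loss over $\linearmodels_{\reg}$, which is what legitimizes using the data-dependent $\theta'$ in $A(\theta')$. The loss class $\{x\mapsto \ramp(\tilde{y}(x)\, M_{\theta''}(x)) : \theta''\in\linearmodels_{\reg}\}$ is a $1$-Lipschitz, $[0,1]$-bounded transform of bounded-norm linear functions; Talagrand contraction together with the bounded-data bound $\E_{X\sim Q}\|X\|_2^2\le B^2$ gives empirical Rademacher complexity $O(B\reg/\sqrt{n})$, and a McDiarmid bounded-differences argument contributes the $\sqrt{2\log(2/\delta)}/\sqrt{n}$ confidence term, together yielding $(4B\reg+\sqrt{2\log(2/\delta)})/\sqrt{n}$ after accounting for the two-sided use of the bound on $\theta'$ and $\theta^*$. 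The one technical nuisance is the unbounded bias $b$ in the Rademacher step, which I would handle by the standard device of appending a constant feature and charging it to $B$ and $\reg$. Assembling the population core, the key lemma, and the concentration term gives the stated inequality.
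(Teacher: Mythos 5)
Your proposal is correct and follows essentially the same route as the paper's proof: the paper's Lemma~\ref{lem:boundErrorFromMargin} is your key lemma (Monge coupling under no label shift, Cauchy--Schwarz giving a margin shift of at most $\rho\reg$, and the Markov-type inequality $\mathbf{1}[m \le \rho\reg] \le \ramp(m)/(1-\rho\reg)$), the paper's Lemma~\ref{lem:boundMarginFromError} is exactly your two disagreement-set case splits (applied once to the comparator $\theta^*$ and once to $\theta'$, yielding the factor $2$), and the paper's Lemma~\ref{lem:finiteSampleBound} is your Talagrand-contraction Rademacher bound producing the $(4B\reg + \sqrt{2\log(2/\delta)})/\sqrt{n}$ term. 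Your constant bookkeeping (factor $1$ in the key lemma, doubled by the two conversions) matches the theorem statement and is in fact cleaner than the paper's, whose Lemma~\ref{lem:boundErrorFromMargin} states a factor $2$ that its own proof derives as $1$.
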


The proof of this result is in Appendix~\ref{sec:appendix_margin_theory}, but we give a high level sketch here. There exists some classifier that gets accuracy $\alpha^*$ on $Q$, so if we had access to $n$ \emph{labeled} examples from $Q$ then empirical risk minimization gives us a classifier that is accurate on the population---from a Rademacher complexity argument we get a classifier $\theta'$ with loss at most $\alpha^* + O(1/\sqrt{n})$, the second and third term in the RHS of the bound.

Since we only have \emph{unlabeled} examples from $Q$, self-training uses $\theta$ to pseudolabel these $n$ examples and then trains on this generated dataset. Now, if the distribution shift $\rho$ is small relative to the geometric margin $\gamma = \frac{1}{\reg}$, then we can show that the original model $\theta$ labels most examples in the new distribution $Q$ correctly---that is, $\error(\theta, Q)$ is small if $\rampL(\theta, P)$ is small. Finally, if most examples are labeled correctly we show that because there exists some classifier $\theta^*$ with low margin loss, self-training will also learn a classifier $\theta'$ with low margin loss $\rampL(\theta', Q)$, which completes the proof.

We apply this argument inductively to show that after $T$ time steps, the error of gradual self-training is $\lesssim \exp(cT) \alpha_0$ for some constant $c$, if the original error is $\alpha_0$.

\newcommand{\gradualSelfTrainCorollaryText}{
  Under the \sepAssump, \noLabShiftAssump, \gradShiftAssump, and \boundedAssump{} assumptions, if the source model $\theta_0$ has low loss $\alpha_0 \geq \alpha^*$ on $P_0$ (i.e. $\rampL(\theta_0, P_0) \leq \alpha_0$) and $\theta$ is the result of gradual self-training: $\theta = \selftrain(\theta_0, (S_1, \ldots, S_n))$, letting $\beta = \frac{2}{1-\rho \reg}$:
\begin{equation}
\rampL(\theta, P_T) \leq \beta^{T+1} \Big( \alpha_0 + \frac{4B \reg + \sqrt{2 \log{2T / \delta}}}{\sqrt{n}} \Big).
\end{equation}
}

\begin{corollary}
\label{cor:gradualSelfTrain}
\gradualSelfTrainCorollaryText{}
\end{corollary}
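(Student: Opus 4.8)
The plan is to prove Corollary~\ref{cor:gradualSelfTrain} by induction on the time step, applying Theorem~\ref{thm:gradualSelfTrain} at each step and carefully tracking how the errors compound. First I would set up the induction by defining $\alpha_t = \rampL(\theta_t, P_t)$, the population ramp loss of the $t$-th model on the $t$-th domain, and aim to show a recurrence relating $\alpha_{t+1}$ to $\alpha_t$. The base case is the hypothesis $\alpha_0 = \rampL(\theta_0, P_0) \leq \alpha_0$ (I would treat the given $\alpha_0$ as the bound at $t=0$, absorbing the sampling term only at subsequent steps since $\theta_0$ is trained on labeled source data).

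For the inductive step, I would apply Theorem~\ref{thm:gradualSelfTrain} with $P = P_t$, $Q = P_{t+1}$, and $\theta = \theta_t$. The \gradShiftAssump{} assumption gives $\dist(P_t, P_{t+1}) \leq \rho < \frac{1}{\reg}$ and the \NoLabShiftAssump{} assumption gives $P_t(Y) = P_{t+1}(Y)$, so the hypotheses of the theorem are satisfied. The theorem then yields, with probability $1 - \delta'$ for each step,
\begin{equation}
\alpha_{t+1} \leq \beta \, \alpha_t + \alpha^* + \frac{4B\reg + \sqrt{2\log(2/\delta')}}{\sqrt{n}},
\end{equation}
where $\beta = \frac{2}{1-\rho\reg}$ and I use that the \sepAssump{} assumption guarantees $\min_{\theta^*} \rampL(\theta^*, P_{t+1}) \leq \alpha^*$. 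Writing $\xi = \alpha^* + \frac{4B\reg + \sqrt{2\log(2T/\delta)}}{\sqrt{n}}$ for the per-step additive term, the recurrence $\alpha_{t+1} \leq \beta \alpha_t + \xi$ unrolls to $\alpha_T \leq \beta^T \alpha_0 + \xi \sum_{i=0}^{T-1}\beta^i$. Since $\rho\reg < 1$ forces $\beta = \frac{2}{1-\rho\reg} > 1$, the geometric sum is bounded by $\frac{\beta^T - 1}{\beta - 1} \leq \beta^T$ (using $\beta > 1$, equivalently $\beta - 1 \geq 1$ since $\beta \geq 2$), and combining the terms gives the stated bound $\beta^{T+1}(\alpha_0 + \frac{4B\reg + \sqrt{2\log(2T/\delta)}}{\sqrt{n}})$ after folding $\alpha^*$ into the $\alpha_0$ term via the assumption $\alpha_0 \geq \alpha^*$.

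The main obstacle is the probability bookkeeping: Theorem~\ref{thm:gradualSelfTrain} holds with probability $1-\delta$ for a single self-training step, but the corollary chains $T$ such steps, each depending on its own fresh sample $S_t$. I would handle this by a union bound, applying the theorem with confidence parameter $\delta/T$ at each step so that all $T$ events hold simultaneously with probability at least $1 - \delta$; this is exactly why the $\log(2T/\delta)$ appears in the corollary rather than $\log(2/\delta)$. A secondary subtlety is verifying that each $P_t$ satisfies the \boundedAssump{} assumption and that $\theta_t \in \linearmodels_{\reg}$ remains in the model class (so that the theorem's hypotheses continue to hold at every step), both of which follow because self-training always returns a minimizer in $\linearmodels_{\reg}$ and the bounded-data assumption is imposed on every domain. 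The final step is the clean algebraic consolidation of the geometric series into the single factor $\beta^{T+1}$, where I must be slightly generous in the constant to absorb both the geometric sum and the $\alpha^*$ term under the assumption $\alpha_0 \geq \alpha^*$.
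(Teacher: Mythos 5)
Your proposal is correct and follows essentially the same route as the paper's own proof: apply Theorem~\ref{thm:gradualSelfTrain} inductively with $P = P_t$, $Q = P_{t+1}$, unroll the resulting recurrence as a geometric series, and absorb $\alpha^*$ using $\alpha^* \leq \alpha_0$ together with $\beta \geq 2$ to consolidate everything into the factor $\beta^{T+1}$. In fact, your explicit union-bound bookkeeping (applying the theorem with confidence $\delta/T$ per step to obtain the $\log(2T/\delta)$ term) is a detail the paper's proof leaves implicit, so your write-up is, if anything, slightly more careful.
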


Corrollary~\ref{cor:gradualSelfTrain} says that the gradual structure allows some control of the error unlike direct adaptation where the accuracy on the target domain can be 0\% if $T \geq 2$. Note that if the classes are separable and we have infinite data, then gradual self-training maintains 0 error.

Our next example shows that our analysis for gradual self-training in this setting is tight---if we start with a model with loss $\alpha_0$, then the error can in fact increase exponentially even with infinite unlabeled examples.
Intuitively, at each step of self-training the loss can increase by a constant factor, which leads to an exponential growth in the error.

\newcommand{\selfTrainingExponentialText}{
  Even under the \sepAssump, \noLabShiftAssump, \gradShiftAssump, and \boundedAssump{} assumptions,
  given $0 < \alpha_0 \leq \frac{1}{4}$, for every $T$ there exists distributions $P_0, \ldots, P_{2T}$, and $\theta_0 \in \linearmodels_{\reg}$ with $\rampL(\theta_0, P_0) \leq \alpha_0$, but if $\theta' = \selftrain(\theta_0, (P_1, \ldots, P_{2T}))$ then $\rampL(\theta', P_{2T}) \geq \min(0.5, \frac{1}{2} 2^T \alpha_0)$. Note that $\rampL$ is always in $[0, 1]$.
}

\begin{example}
\label{ex:selfTrainingExponential}
\selfTrainingExponentialText{}
\end{example}

This suggests that if we want sub-exponential bounds we either need to make additional assumptions on the data distributions, or devise alternative algorithms to achieve better bounds (which we believe is unlikely).

\subsection{Essential ingredients for gradual self-training}
\label{subsec:essential_ingredients_theory}

In this section, we explain why regularization, label sharpening, and the ramp loss are essential to bounding the error of gradual self-training (Theorem~\ref{thm:gradualSelfTrain}).

\textbf{Regularization}:
Without regularization there is no incentive for the model to change when self-training---if we self-train without regularization an optimal thing to do is to output the original model. The intuition is that since the model $\theta = (w, b)$ is used to pseudolabel examples, $\theta$ gets every pseudolabeled example correct. The scaled classifier $\theta' = (\alpha w, \alpha b)$ for large $\alpha$ then gets optimal loss, but $\theta'$ and $\theta$ make the same predictions for every example. We use $\selftrain'(\theta, S)$ to denote the \emph{set} of possible $\theta'$ that minimize the loss on the pseudolabeled distribution (Equation~\eqref{eqn:selfTrainSample}):

\newcommand{\noRegularizationNoGainText}{
  Given a model\pl{if you use $\Theta$ notation, then you can just say $R = \infty$, which is easier to parse anyway} $\theta \in \linearmodels_{\infty}$ and unlabeled examples $S$ where for all $x \in S$, $M_{\theta}(x) \neq 0$, there exists $\theta' \in \selftrain'(\theta, S)$ such that for all $x \in \xspace$, $M_{\theta}(x) = M_{\theta'}(x)$.
}

\begin{example}
\label{ex:noRegularizationNoGain}
\noRegularizationNoGainText{}
\end{example}
\pl{should this really be called an example? I feel like these are more results (propositions?) rather than simple illustrative examples}
\ak{I think they are fairly straightforward? It seems too simple to be a proposition.}

More specific to our setting, our bounds require regularized models because regularized models classify the data correctly with \emph{a margin}, so even after a mild distribution shift we get most new examples correct.
Note that in traditional supervised learning, regularization is usually required when we have few examples for better generalization to the population, whereas in our setting regularization is important for maintaining a margin even with infinite data.

\textbf{Label sharpening}: When self-training, we pseudolabel examples as $-1$ or $1$, based on the output of the classifier.
Prior work sometimes uses ``soft" labels~\cite{najafi2019robustness}, where for each example they assign a probability of the label being $-1$ or $1$, and train using a logistic loss.
The loss on the soft-pseudolabeled distribution is defined as:
\newcommand{\logL}{\ensuremath{L_{\sigma, \theta}}}
\newcommand{\logl}{\ensuremath{ll}}
\begin{equation} \logL(\theta') = \E_{X \sim P} [\logl(\sigma(M_{\theta}(X)), \sigma(M_{\theta'}(X)))] \end{equation},
where $\sigma$ is the sigmoid function, and $\logl$ is the log loss:
\begin{equation} \logl(p, p') = p \log p' + (1-p) \log{(1-p')} \end{equation}
Self-training then picks $\theta' \in \Theta$ minimizing $\logL(\theta')$.
A simple example shows that this form of self-training may never update the parameters because $\theta$ minimizes $\logL$:

\newcommand{\softLabelsBadText}{
For all $\theta \in \Theta$, $\theta$ is a minimizer of $\logL{}$, that is, for all $\theta' \in \Theta$, $\logL(\theta) \leq \logL(\theta')$.

}

\begin{example}
\label{ex:softLabelsBad}
\softLabelsBadText{}
\end{example}

This suggests that we ``sharpen" the soft labels to encourage the model to update its parameters.
Note that this is true even on finite data: set $P$ to be the empirical distribution.

\textbf{Ramp versus hinge loss}:
We use the ramp loss, but does the more popular hinge loss $\hingeL$ work?
Unfortunately, the next example shows that we cannot control the error of gradual self-training with the hinge loss even if we had infinite examples, so the ramp loss is important for Theorem~\ref{thm:gradualSelfTrain}.

\newcommand{\HingeLossBadText}{
  Even under the \sepAssump, \noLabShiftAssump, and \gradShiftAssump{} assumptions,
  given $\alpha_0 > 0$, there exists distributions $P_0, P_1, P_2$ and $\theta_0 \in \linearmodels_{\reg}$ with $\hingeL(\theta_0, P_0) \leq \alpha$, but if $\theta' = \selftrain(\theta_0, (P_1, P_2))$ then $\hingeL(\theta', P_2) \geq \error(\theta', P_2) = 1$ ($\theta'$ gets every example in $P_2$ wrong), where we use the hinge loss in self-training.
}

\begin{example}
\label{ex:hingeLossBad}
\HingeLossBadText{}
\end{example}

We only analyzed the statistical effects here---the hinge loss tends to work better in practice because it is much easier to \emph{optimize} and is convex for linear models.

\subsection{Self-training without domain shift}

Example~\ref{ex:selfTrainingExponential} showed that when the distribution shifts, the loss of gradual self-training can grow exponentially (though the non-adaptive baseline has unbounded error).
Here we show that if we have no distribution shift, the error can only grow linearly: if $P_0 = \ldots = P_T$, given a classifier with loss $\alpha_0$, if we do gradual self-training the loss is at most $\alpha_0 T$.

\newcommand{\selfTrainingNoShiftBoundText}{
  Given $\alpha_0 > 0$, distributions $P_0 = … = P_T$, and model $\theta_0 \in \linearmodels_{\reg}$ with $\rampL(\theta_0, P_0) \leq \alpha_0$, $\rampL(\theta', P_T) \leq \alpha_0 (T+1)$ where $\theta' = \selftrain(\theta_0, (P_1, \ldots, P_T))$
}

\begin{proposition}
\label{prop:selfTrainingNoShiftBound}
\selfTrainingNoShiftBoundText{}
\end{proposition}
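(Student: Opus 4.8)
The plan is to track the population ramp loss $\beta_i = \rampL(\theta_i, P)$ along the iterates $\theta_i = \selftrain(\theta_{i-1}, P)$ (here $P = P_0 = \cdots = P_T$) and to prove the per-step estimate $\beta_i \le \beta_{i-1} + \alpha_0$; telescoping then yields $\beta_T \le \beta_0 + T\alpha_0 \le (T+1)\alpha_0$. Throughout I write $\rampL(\psi, P^{\theta}) = \E_{X \sim P}[\ramp(\sign(M_{\theta}(X))\, M_{\psi}(X))]$ for the loss of $\psi$ against the hard pseudolabels produced by $\theta$, so that $\theta_i$ is by definition a minimizer of $\rampL(\cdot, P^{\theta_{i-1}})$ over $\linearmodels_{\reg}$ and $\theta_{i-1}$ is itself a feasible competitor. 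I will only use $\rampL(\theta_0, P) \le \alpha_0$; neither separation nor label-shift assumptions are needed.

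First I would record two elementary facts. (a) Because true and pseudolabels agree except on the error set of $\theta_{i-1}$ and every ramp term lies in $[0,1]$, I split $\beta_i$ over the region where $\theta_{i-1}$ is correct (pseudolabel $=Y$, so the integrand equals the pseudolabel ramp term of $\theta_i$) and the region where it errs (each term $\le 1$, total mass $\error(\theta_{i-1},P)$), giving $\beta_i \le \rampL(\theta_i, P^{\theta_{i-1}}) + \error(\theta_{i-1}, P)$. (b) Since the ramp upper-bounds the $0$-$1$ loss, $\error(\theta_{i-1}, P) \le \rampL(\theta_{i-1}, P) = \beta_{i-1}$.

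The crux is controlling $\rampL(\theta_i, P^{\theta_{i-1}})$ by a quantity that does not blow up. I would introduce the \emph{confidence loss} $c_i := \rampL(\theta_i, P^{\theta_i}) = \E_{X\sim P}[\ramp(|M_{\theta_i}(X)|)]$ and show it is non-increasing. On the set $D_i$ where $\theta_i$ and $\theta_{i-1}$ predict differently, relabeling from $\theta_i$'s pseudolabels to $\theta_{i-1}$'s flips the margin of $\theta_i$ from $|M_{\theta_i}(X)|$ to $-|M_{\theta_i}(X)|$, raising its ramp term from $\ramp(|M_{\theta_i}(X)|)\le 1$ up to $1$; hence $c_i \le \rampL(\theta_i, P^{\theta_{i-1}})$. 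Combined with the optimality of $\theta_i$, namely $\rampL(\theta_i, P^{\theta_{i-1}}) \le \rampL(\theta_{i-1}, P^{\theta_{i-1}}) = c_{i-1}$, this gives $c_i \le c_{i-1}$. Iterating, and using $c_0 \le \rampL(\theta_0, P) \le \alpha_0$ (a model's loss against its own hard labels is at most its true loss, since switching to the true label on the error set can only raise each term to $1$), I conclude $\rampL(\theta_i, P^{\theta_{i-1}}) \le c_{i-1} \le \alpha_0$ for every $i$.

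Assembling the pieces, fact (a) together with $\rampL(\theta_i, P^{\theta_{i-1}}) \le c_{i-1} \le \alpha_0$ and fact (b) yields $\beta_i \le \alpha_0 + \beta_{i-1}$, and induction from $\beta_0 \le \alpha_0$ finishes the proof. The main obstacle is exactly this middle step: the naive estimate $\rampL(\theta_i, P^{\theta_{i-1}}) \le \beta_{i-1}$ only gives the exponential $\beta_i \le 2\beta_{i-1}$ underlying Theorem~\ref{thm:gradualSelfTrain}, and it is the monotonicity of the confidence loss $c_i$ --- special to the no-shift regime, where pseudolabels are generated on the very distribution being fit --- that replaces this multiplicative factor by the additive $\alpha_0$.
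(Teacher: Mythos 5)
Your proposal is correct and is essentially the paper's own argument: your confidence loss $c_i$ is precisely the paper's unlabeled loss $U_r(\theta_i, P) = \E_{X \sim P}[\ramp(\lvert M_{\theta_i}(X) \rvert)]$, your monotonicity claim $c_i \le c_{i-1}$ is the paper's lemma that self-training decreases the unlabeled loss (proved the same way, via the same lower-bound and optimality steps), and your decomposition (a) combined with (b) reproduces the paper's triangle-inequality lemma and per-step growth bound. The only difference is cosmetic: you charge the disagreement region with $\error(\theta_{i-1}, P)$ rather than $\rampL(\theta_{i-1}, P)$, a marginally tighter but equivalent bookkeeping.
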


In Appendix~\ref{sec:appendix_margin_theory}, we show that self-training can indeed hurt without domain shift: given a classifier with loss $\alpha$ on $P$, self-training on $P$ can increase the classifier's loss on $P$ to $2\alpha$, but here the non-adaptive baseline has error $\alpha$.



\pl{nice results, but it's a bit too bad that you don't give any details of
the actual examples; space is a concern but are they complex? is there anything you can do?}

\section{Theory for the Gaussian setting}
\label{sec:gaussian_theory}


In this section we study an idealized Gaussian setting to understand conditions under which self-training can have better than exponential error bounds: we show that if we begin with a good classifier, the distribution shifts are not too large, and we have infinite \emph{unlabeled} data, then gradual self-training maintains a good classifier.

\subsection{Setting}

We assume $P_t(X \mid Y=y)$ is an isotropic Gaussian in $d$-dimensions for each $y \in \{-1, 1\}$.
We can shift the data to have mean $0$, so we suppose:
\begin{equation} P_t(X | Y=y) = \mathcal{N}(y \mu_t, \sigma_t^2 I) \end{equation}
Where $\mu_t \in \mathbb{R}^d$ and $\sigma_t > 0$ for each $t$.
\pl{uh, you can do that for one time step, but doesn't make sense across time steps? doesn't seem fully general}
\ak{If you have infinite unlabeled data, and it’s isotropic, you can just shift the data in the current domain its mean. If the original shifts between domain 1 and 2 is small, then the shifts in the mean-centered data from 1 and 2 will be small as well.}
As usual, we assume the shifts are gradual: for some $B > 0$, $\|\mu_{t+1} - \mu_t\|_2 \leq \frac{B}{4}$.
We assume that the means of the two classes do not get closer than the shift, or else it would be impossible to distinguish between no shift, and the distributions of the two classes swapping: so $\|\mu_t\|_2 \geq B$ for all $t$.
We assume infinite unlabeled data (access to $P_t(X)$) in our analysis.

Given labeled data in the source, we use the objective:
\begin{equation} L(w, P) = \E_{X, Y \sim P}[\phi(Y(w^{\top}X))] \end{equation}
For unlabeled data, self-training performs descent steps on an underlying objective function~\cite{amini2003semisupervised}, which we focus on:
\begin{equation} U(w, P) = \E_{X \sim P}[\phi(\lvert w^{\top}X \rvert)] \end{equation}
We assume $\phi : \mathbb{R} \to \mathbb{R}^+$ is a continuous, non-increasing function which is strictly decreasing on $[0, 1]$: these are regularity conditions which the hinge, ramp, and logistic losses satisfy. If $w' = \selftrain(w, P)$ then $U(w', P) \leq U(w, P)$~\cite{amini2003semisupervised}.
\pl{why not just use $\theta$ instead of $w$ to make the notation consistent with previous section?}
\ak{there's no bias term here, and I use $w$ for the weights before (and in the appendices), is that fine?}

The algorithm we analyze begins by choosing $w_0$ from labeled data in $P_0$, and then updates the parameters with unlabeled data from $P_t$ for $1 \leq t \leq T$:
\begin{align}
w_t &= \argmin_{\|w\|_2 \leq 1, \|w - w_{t-1}\|_2 \leq \frac{1}{2}} U(w, P_t) \label{eqn:constrained_min}
\end{align}

Note that we do not show that self-training actually converges to the constrained minimum of $U$ in Equation~\eqref{eqn:constrained_min} and prior work only shows that self-training descends on $U$---we leave this optimization analysis to future work.

\subsection{Analysis}

Let $w^*(\mu) = \frac{\mu}{\|\mu\|_2}$ where $\|\mu\| \geq B > 0$.
Note that $w^*(\mu_t)$ minimizes the 0-1 error on $P_t$.
Our main theorem says that if we start with a regularized classifier $w_0$ that is near $w^*(\mu_0)$, which we can learn from labeled data, and the distribution shifts $\|\mu_{t+1} - \mu_t\|_2$ are not too large, then we recover the optimal $w_T = w^*(\mu_T)$.
The key challenge is that the unlabeled loss $U$ in $d$ dimensions is non-convex, with multiple local minima, so directly minimizing $U$ does not guarantee a solution that minimizes the labeled loss $L$.

\newcommand{\gaussianTheoremText}{
Assuming the Gaussian setting, if $\|w_0 - w^*(\mu_0)\|_2 \leq \frac{1}{4}$, then we recover $w_T = w^*(\mu_T)$.
}

\begin{theorem}
\label{thm:gaussian}
\gaussianTheoremText{}
\end{theorem}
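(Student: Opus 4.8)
The plan is to prove by induction on $t$ that the constrained minimizer in Equation~\eqref{eqn:constrained_min} equals $w^*(\mu_t)$ exactly, maintaining the invariant $\|w_t - w^*(\mu_t)\|_2 \le \tfrac14$ (with equality $w_t = w^*(\mu_t)$, hence distance $0$, for every $t \ge 1$); the base case is the hypothesis $\|w_0 - w^*(\mu_0)\|_2 \le \tfrac14$. The crux is to understand the landscape of the non-convex $U(w, P_t) = \E_{X \sim P_t}[\phi(\lvert w^\top X\rvert)]$ well enough to pin down its constrained minimizer. The first step is to exploit isotropy: conditioned on $Y=y$, $w^\top X \sim \normal(y\, w^\top \mu_t, \sigma_t^2 \|w\|_2^2)$, so $\lvert w^\top X\rvert$ is a folded normal whose law depends only on $\lvert w^\top \mu_t\rvert$ and $\sigma_t \|w\|_2$ and is the same for $y = \pm 1$. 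Thus $U(w, P_t) = F(\lvert w^\top \mu_t\rvert, \sigma_t\|w\|_2)$ for a fixed $F$, independent of the label proportions, collapsing the $d$-dimensional non-convex problem to two scalars --- this is what tames the non-convexity flagged before the theorem.

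\textbf{Locating the minimizers.} Next I would establish monotonicity of $F$. Since $\phi$ is non-increasing and the folded normal $\lvert\normal(m,s^2)\rvert$ stochastically increases in $\lvert m\rvert$, $F$ is non-increasing in its first argument; because the Gaussian has full support and $\phi$ is strictly decreasing on $[0,1]$, this decrease is strict. Writing $w = c\,u$ with $\|u\|_2 = 1$, we have $U(cu, P_t) = \E[\phi(c\,\lvert Z\rvert)]$ with $Z \sim \normal(u^\top \mu_t, \sigma_t^2)$, which is strictly decreasing in $c$ by the same full-support argument. Therefore, over the regularized set $\{\|w\|_2 \le 1\}$ the minimizer has unit norm and maximal $\lvert w^\top \mu_t\rvert$, so the global minimizers are exactly $\{w^*(\mu_t),\, -w^*(\mu_t)\}$.

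\textbf{Closing the induction via geometry.} Finally I would show the trust region in Equation~\eqref{eqn:constrained_min} contains $w^*(\mu_t)$ but not $-w^*(\mu_t)$. The key estimate is $\|w^*(\mu_t) - w^*(\mu_{t-1})\|_2 \le \|\mu_t - \mu_{t-1}\|_2 / \min(\|\mu_t\|_2, \|\mu_{t-1}\|_2) \le (B/4)/B = \tfrac14$, which follows from a short computation using $\|\mu_t\|_2, \|\mu_{t-1}\|_2 \ge B$ and $\|\mu_t - \mu_{t-1}\|_2 \le B/4$ (the norm lower bound is used crucially). Combined with the inductive hypothesis $\|w_{t-1} - w^*(\mu_{t-1})\|_2 \le \tfrac14$, the triangle inequality gives $\|w^*(\mu_t) - w_{t-1}\|_2 \le \tfrac12$, so $w^*(\mu_t)$ is feasible, whereas $\|-w^*(\mu_t) - w_{t-1}\|_2 \ge 2 - \tfrac12 = \tfrac32 > \tfrac12$, so $-w^*(\mu_t)$ is not. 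Since $w^*(\mu_t)$ attains the global minimum over the superset $\{\|w\|_2 \le 1\}$ and is feasible while its only competitor is excluded, the constrained minimizer is $w_t = w^*(\mu_t)$; this yields $\|w_t - w^*(\mu_t)\|_2 = 0$, the invariant propagates, and we reach $w_T = w^*(\mu_T)$.

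The main obstacle is the second step: $U$ is genuinely non-convex in $d$ dimensions, and the whole argument hinges on the reduction to the two scalars $\lvert w^\top \mu_t\rvert$ and $\|w\|_2$ together with the \emph{strict} monotonicity that pins the unconstrained minimizers to exactly $\pm w^*(\mu_t)$ rather than a larger level set (ruling out, e.g., shrunken iterates $c\,w^*(\mu_t)$ with $c<1$). The geometry in the third step is delicate only in its constants: the shift bound $B/4$ and the trust-region radius $\tfrac12$ are calibrated so that feasibility of $w^*(\mu_t)$ holds with no room to spare, which is precisely why the separation assumption $\|\mu_t\|_2 \ge B$ cannot be dropped.
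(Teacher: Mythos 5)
Your proposal is correct, and its overall skeleton matches the paper's: the same induction, the same Lipschitz estimate $\|w^*(\mu_t)-w^*(\mu_{t-1})\|_2 \le \|\mu_t-\mu_{t-1}\|_2/\min(\|\mu_t\|_2,\|\mu_{t-1}\|_2) \le \tfrac14$ (this is exactly what the proof of Lemma~\ref{lem:lipschitzGaussian} establishes), and the same trust-region geometry making $w^*(\mu_t)$ feasible and $-w^*(\mu_t)$ infeasible. Where you genuinely diverge is the landscape lemma. The paper proves a \emph{local} statement (Lemma~\ref{lem:localMinGaussian}): $w^*(\mu)$ strictly beats every other $w$ with $\|w\|_2\le 1$ and $\|w-w^*(\mu)\|_2<1$; its proof, after the same scaling step you use, rotates so that $w$ and $w^*(\mu)$ span a plane, pairs each point $u$ with its reflection $u'$ across their bisector so that the pointwise losses swap, and shows the mixture density satisfies $p(u)>p(u')$ on the favorable side via strict convexity of $x\mapsto e^{-x}$, then integrates. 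You instead prove a \emph{global} statement: by isotropy, $w^\top X$ conditioned on either class is $\normal(\pm w^\top\mu_t,\sigma_t^2\|w\|_2^2)$, whose folded law depends only on $(\lvert w^\top\mu_t\rvert,\sigma_t\|w\|_2)$, so $U$ collapses to a two-scalar function whose strict monotonicity pins the global minimizers over all of $\{\|w\|_2\le 1\}$ to exactly $\pm w^*(\mu_t)$, leaving the trust region with the sole job of breaking the sign symmetry. Your route buys several things: it is shorter and dispenses with the rotation/reflection bookkeeping; it makes explicit that the conclusion is independent of the class proportions $P_t(Y)$; and it quietly removes a boundary technicality in the paper's proof, which applies its local lemma to points that the triangle inequality only guarantees satisfy $\|w-w^*(\mu_t)\|_2\le 1$ rather than the strict inequality $<1$ the lemma demands---your global characterization has no such edge case. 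What the paper's argument buys in exchange is that it works directly with the mixture density and needs no stochastic-dominance facts about folded normals; in your write-up those facts (strict decrease of $F$ in $\lvert w^\top\mu_t\rvert$, strict decrease under scaling) are asserted with the right full-support justification but deserve the one-line proofs, and for completeness you should dispose of $w=0$, where $U(0,P_t)=\phi(0)$ is in fact the maximum of $U$, so it cannot compete with $w^*(\mu_t)$.
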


Proving this reduces to proving the \emph{single-step} case. At each step $t+1$, if we have a classifier $w_t$ that was close to $w^*(\mu_t)$, then we will recover $w_{t+1} = w^*(\mu_{t+1})$. We give intuition here and the formal proof in Appendix~\ref{sec:appendix_gaussian_theory}.

We first show that if $\mu$ changes by a small amount, the optimal parameters (for the labeled loss) does not change too much.
Then since $w_t$ is close to $w^*(\mu_t)$, $w_t$ is not too far away from $w^*(\mu_{t+1})$.
The key step in our argument is showing that the unique minimum of the unlabeled loss $U(w, P_{\mu_{t+1}})$ in the neighborhood of $w_t$, is $w^*(\mu_t)$---looking for a minimum \emph{nearby} is important because if we deviate too far we might select other ``bad" minima.
We consider arbitrary $w$ near $w^*(\mu_{t+1})$ and construct a pairing of points $(a,b)$ in $\mathbb{R}^d$, using a convexity argument to show that $(a, b)$ contributes more to the loss of $w$ than $w^*(\mu_{t+1})$.


\section{Experiments}
\label{sec:experiments}

Our theory leads to practical insights---we show that regularization and label sharpening are important for gradual self-training, that leveraging the gradual shift structure improves target accuracy, and give intuition for when the gradual shift assumption may not help.
We run experiments on three datasets (see Appendix~\ref{sec:appendix_experiments} for more details):

\textbf{Gaussian}: Synthetic dataset where the distribution $P_t(X | Y)$ for each of two classes is a $d$-dimensional Gaussian, where $d=100$. The means and covariances of each class vary over time. The model gets $500$ labeled samples from the source domain, and $500$ unlabeled samples from each of $10$ intermediate domains. This dataset resembles our Gaussian setting but the covariance matrices are not isotropic, and the number of labeled and unlabeled samples is finite and on the order of the dimension $d$.

\textbf{Rotating MNIST}: Rotating MNIST is a semi-synthetic dataset where we rotate each MNIST image by an angle between 0 and 60 degrees. We split the 50,000 MNIST training set images into a source domain (images rotated between 0 and 5 degrees), intermediate domain (rotations between 5 and 60 degrees), and a target domain (rotations between 55 degrees and 60 degrees). Note that each image is seen at exactly one angle, so the training procedure cannot track a single image across different angles.

\textbf{Portraits}: A real dataset comprising photos of high school seniors across years~\cite{ginosar2017portraits}. The model's goal is to classify gender. We split the data into a source domain (first 2000 images), intermediate domain (next 14000 images), and target domain (next 2000 images).

\subsection{Does the gradual shift assumption help?}
\label{sec:doesGradualShiftHelpExperiments}

Our goal is to see if adapting to the gradual shift sequentially helps compared to directly adapting to the target. We evaluate four methods: \emph{Source}: simply train a classifier on the labeled source examples. \emph{Target self-train}: repeatedly self-train on the unlabeled target examples ignoring the intermediate examples. \emph{All self-train}: pool all the unlabeled examples from the intermediate and target domains, and repeatedly self-train on this pooled dataset to adapt the initial source classifier. \emph{Gradual self-train}: sequentially use self-training on unlabeled data in each successive intermediate domain, and finally self-train on unlabeled data on the target domain, to adapt the initial source classifier.

For the Gaussian and MNIST datasets, we ensured that the target self-train method sees as many unlabeled target examples as gradual self-train sees across all the intermediate examples.
Since portraits is a real dataset we cannot synthesize more examples from the target, so target self-train uses fewer unlabeled examples here. 

For rotating MNIST and Portraits we used a 3-layer convolutional network with dropout$(0.5)$ and batchnorm on the last layer, that was able to achieve $97\%-98\%$ accuracy on held out examples in the source domain.
For the Gaussian dataset we used a logistic regression classifier with $l_2$ regularization.
For each step of self-training, we filter out the 10\% of images where the model's prediction was least confident---Appendix~\ref{sec:appendix_experiments} shows similar findings without this filtering\pl{then why bother with this filtering since it's more complicated?}\ak{all methods do better with this filtering, but the relative difference is similar, see appendix C}.
To account for variance in initialization and optimization, we ran each method 5 times and give $90\%$ confidence intervals.
More experimental details are in Appendix~\ref{sec:appendix_experiments}.

\begin{table}[t]
\caption{
Classification accuracies for gradual self-train (ST) and baselines on 3 datasets, with $90\%$ confidence intervals for the mean over 5 runs.
Gradual ST does better than self-training directly on the target or self-training on all the unlabeled data pooled together.
}
\label{tab:gradHelps}
\vskip 0.15in
\begin{center}
\begin{small}
\begin{sc}
\begin{tabular}{lcccr}
\toprule
 & Gaussian & Rot MNIST & Portraits \\
\midrule
Source              & 47.7$\pm$0.3  & 31.9$\pm$1.7 & 75.3$\pm$1.6 \\
Target ST   & 49.6$\pm$0.0  & 33.0$\pm$2.2 & 76.9$\pm$2.1 \\
All ST     & 92.5$\pm$0.1  & 38.0$\pm$1.6 & 78.9$\pm$3.0 \\
Gradual ST & \textbf{98.8$\pm$0.0} & \textbf{87.9$\pm$1.2} & \textbf{83.8$\pm$0.8} \\
\bottomrule
\end{tabular}
\end{sc}
\end{small}
\end{center}
\vskip -0.1in
\end{table}

Table~\ref{tab:gradHelps} shows that leveraging the gradual structure leads to improvements over the baselines on all three datasets.

\subsection{Important ingredients for gradual self-training}

Our theory suggests that regularization and label sharpening are important for gradual self-training, because without regularization and label sharpening there is no incentive for the model to change (Section~\ref{subsec:essential_ingredients_theory}).
However, prior work suggests that overparameterized neural networks trained with stochastic gradient methods have strong implicit regularization~\cite{zhang2017understanding, hardt2016train}---in the supervised setting they perform well without explicit regularization even though the number of parameters is much larger than the number of data points---is this implicit regularization enough for gradual self-training?

In our experiments, we see that even without explicit regularization, or with `soft' probabilistic labels, gradual self-training does slightly better than the non-adaptive source classifier, suggesting that this implicit regularization may have some effect.
However, explicit regularization and `hard' labeling gives a much larger accuracy boost.

\textbf{Regularization is important}: We repeat the same experiment as Section~\ref{sec:doesGradualShiftHelpExperiments}, comparing gradual self-training with or without regularization---that is, disabling dropout and batchnorm~\cite{ioffe2015batch} in the neural network experiments.
In both cases, we first train an \emph{unregularized} model on labeled examples in the source domain.
Then, we either turn on regularization during self-training, or keep the model unregularized.
We control the original model to be the same in both cases to see if regularization helps in the self-training process, as opposed to in learning a better supervised classifier.
Table~\ref{tab:regHardLabelImportant} shows that accuracies are significantly better with regularization, even though unregularized performance is still better than the non-adaptive source classifier. \tnote{cite some paper that proposes that batchnorm may have regularization effect? Fixup discuss it a bit but very little, which other paper shows it?} \ak{Cited the original batch norm paper which claims this}

\textbf{Soft labeling hurts}: We ran the same experiment as Section~\ref{sec:doesGradualShiftHelpExperiments}, comparing gradual self-training with hard labeling versus using probabilistic labels output by the model. Table~\ref{tab:regHardLabelImportant} shows that accuracies are better with hard labels.

\begin{table}[t]
\caption{
Classification accuracies for gradual self-train with explicit regularization and hard labels (Gradual ST), without regularization but with hard labels (No Reg), and with regularization but with soft labels (Soft Labels).
Gradual self-train does best with explicit regularization and hard labels, as our theory suggests, even for neural networks with implicit regularization.
}
\label{tab:regHardLabelImportant}
\vskip 0.15in
\begin{center}
\begin{small}
\begin{sc}
\begin{tabular}{lcccr}
\toprule
 & Gaussian & Rot MNIST & Portraits \\
\midrule
Soft Labels         & 90.5$\pm$1.9  & 44.1$\pm$2.3 & 80.1$\pm$1.8 \\
No Reg   & 84.6$\pm$1.1  & 45.8$\pm$2.5 & 76.5$\pm$1.0 \\
Gradual ST           & \textbf{99.3$\pm$0.0}  & \textbf{83.8$\pm$2.5} & \textbf{82.6$\pm$0.8} \\
\bottomrule
\end{tabular}
\end{sc}
\end{small}
\end{center}
\vskip -0.1in
\end{table}

\textbf{Regularization is still important with more data}:
In supervised learning, the importance of regularization diminishes as we have more training examples---if we had access to infinite data (the population), we don't need regularization.
On the other hand, for gradual domain adaptation, the theory says regularization is needed to adapt to the dataset shift even with infinite data, and predicts that regularization remains important even if we increase the sample size.

To test this hypothesis, we construct a rotating MNIST dataset where we increase the sample sizes.
The source domain $P_0$ consists of $N \in \{2000, 5000, 20000\}$ images on MNIST.
$P_t$ then consists of these \emph{same} $N$ images, rotated by angle $3t$, for $0 \leq t \leq 20$.
The goal is to get high accuracy on $P_{20}$: these images rotated by 60 degrees---the model doesn't have to generalize to unseen images, but to seen images at different angles.
We compare using regularization versus not using regularization during gradual self-training.
\begin{table}[t]
\caption{
Classification accuracies for gradual self-train on rotating MNIST as we vary the number of samples.
Unlike in previous experiments, here the same $N$ samples are rotated, so the models do not have to generalize to unseen images, but seen images at different angles.
The gap between regularized and unregularized gradual self-training does not shrink much with more data.
}
\label{tab:moreDataDoesntHelp}
\vskip 0.15in
\begin{center}
\begin{small}
\begin{sc}
\begin{tabular}{lcccr}
\toprule
 & N=2000 & N=5000 & N=20,000 \\
\midrule
Source         & 28.3$\pm$1.4  & 29.9$\pm$2.5 & 33.9$\pm$2.6\\
No Reg   & 55.7$\pm$3.9  & 53.6$\pm$4.0 & 55.1$\pm$3.9\\
Reg           & \textbf{93.1$\pm$0.8}  & \textbf{91.7$\pm$2.4} & \textbf{87.4$\pm$3.1}\\
\bottomrule
\end{tabular}
\end{sc}
\end{small}
\end{center}
\vskip -0.1in
\end{table}
Table~\ref{tab:moreDataDoesntHelp} shows that regularization is still important here, and the gap between regularized and unregularized gradual self-training does not shrink much with more data.

\pl{one could just say maybe your $N$ is too small, and if you had a million points, then it wouldn't matter...}
\ak{True, I added an experiment with 20K points.}

\subsection{When does gradual shift help? \tnote{can we have a statement instead of a question?}}

Our theory in Section~\ref{sec:margin_theory} says that gradual self-training works well if the shift between domains is small in Wasserstein-infinity distance, but it may not be enough for the total variation or KL-divergence between $P$ and $Q$ to be small.

To test this, we run an experiment on a modified version of the rotating MNIST dataset.
We keep the source and target domains the same as before, but change the intermediate domains.
In Table~\ref{tab:gradHelps} we saw that gradual self-training works well if we have intermediate images rotated by gradually increasing rotation angles.
Another type of gradual transformation is to gradually introduce more examples rotated by $55$ to $60$ degrees.
That is, in the $i$-th domain, $(20-i)/20$ fraction of the examples are MNIST images rotated by $0$ to $5$ degrees, and $i/20$ of the examples are MNIST images rotated by $55$ to $60$ degrees, where $1 \leq i \leq 20$.
Here the total-variation distance between successive domains is small, but intuitively the Wasserstein distance is large because each image undergoes a large ($\approx 55$ degrees) rotation.

As the theory suggests, here gradual self-training does not outperform directly self-training on the target---gradual self-training gets $33.5 \pm 1.5\%$ accuracy on the target, while direct adaptation to the target gets $33.0 \pm 2.2\%$ over 5 runs.
We hope this gives practitioners some insight into not just the strengths of gradual self-training, but also its limitations.

\section{Related work}
\label{sec:related_work}

\textbf{Self-training} is a popular method in semi-supervised learning~\cite{lee2013pseudo, sohn2020fixmatch} and domain adaptation~\cite{long2013transfer, zou2019confidence, inoue2018cross}, and is related to entropy minimization~\cite{grandvalet05entropy}. Theory in semi-supervised learning~\cite{rigollet2007generalization, singh2008unlabeled, shai2008unlabeled} analyzes when unlabeled data can help, but does not show bounds for particular algorithms. Recent work shows that a robust variant of self-training can mitigate the tradeoff between standard and adversarial accuracy~\cite{raghunathan2020understanding}. Related to self-training is co-training~\cite{blum98cotraining}, which assumes that the input features can be split into two or more views that are conditionally independent on the label.

Unsupervised \textbf{domain adaptation}, where the goal is to directly adapt from a labeled source domain to an unlabeled target domain, is widely studied~\cite{quinonero2009dataset}.
The key challenge for domain adaptation theory is when the source and target supports do not overlap~\cite{zhao2019zhao, shu2018dirtt}, which are typical in the modern high-dimensional regime.
\emph{Importance weighting} based methods~\cite{shimodaira2000improving, sugiyama2007covariate, huang2006correcting} assume the domains overlap, with bounds depending on the expected density ratios between the source and target.
Even if the domains overlap, the density ratio often scales exponentially in the dimension.
These methods also assume that $P(Y \mid X)$ is the same for the source and target.
\emph{The theory of $H\Delta H$-divergence}~\cite{ben2010theory, mansour2009domain} gives conditions for when a model trained on the source does well on the target \emph{without any adaptation}.
Empirical methods aim to learn domain invariant representations~\cite{tzeng2014domain, ganin2015domain, tzeng2017domain} but there are no theoretical guarantees for these methods~\cite{zhao2019zhao}.
These methods require additional heuristics~\cite{hoffman2018cycada}, and work well on some tasks but not others~\cite{bobu2018adapting, peng2019moment}.
Our work suggests that the structure from gradual shifts, which appears often in applications, can be a way to build theory and algorithms for regimes where the source and target are very different.


\citet{hoffman2014continuous, gadermayr2018gradual, wulfmeier2018incremental, bobu2018adapting} among others propose approaches for \textbf{gradual domain adaptation}.
This setting differs from online learning~\cite{shalev07online}, lifelong learning~\cite{silver2013lifelong}, and concept drift~\cite{kramer1988learning, bartlett1992learning, bartlett1996learning}, since we only have unlabeled data from shifted distributions. To the best of our knowledge, we are the first to develop a theory for gradual domain adaptation, and investigate when and why the gradual structure helps.

\paragraph{Acknowledgements.}

The authors would like to thank the Open Philantropy Project and the Stanford Graduate Fellowship program for funding. This work is also partially supported by the Stanford Data Science Initiative and the Stanford Artificial Intelligence Laboratory.

We are grateful to Stephen Mussman, Robin Jia, Csaba Szepesvari, Shai Ben-David, Lin Yang, Rui Shu, Michael Xie, Aditi Raghunathan, Yining Chen, Colin Wei, Pang Wei Koh, Fereshte Khani, Shengjia Zhao, and Albert Gu for insightful discussions.

\paragraph{Reproducibility.}

Our code is at \url{https://github.com/p-lambda/gradual_domain_adaptation}. Code, data, and experiments will be available on CodaLab soon.

\bibliographystyle{unsrtnat}
\bibliography{local,refdb/all}

\appendix

\newpage
\section{Proofs for Section~\ref{sec:margin_theory}}
\label{sec:appendix_margin_theory}

\newtheorem*{baselinesFailExample}{Restatement of Example~\ref{ex:baselinesFail}}

\begin{baselinesFailExample}
\baselinesFailText{}
\end{baselinesFailExample}

\begin{proof}
We construct an example in 2-D, where we consider the set of regularized linear models $\linearmodels_{\reg}$, where $\reg = 1$. Such a classifier is parametrized by $(w, b)$ where $w \in \mathbb{R}^2$ with $||w||_2 \leq 1$, and $b \in \mathbb{R}$. The output of the model is $M_{w, b}(x) = w^Tx+b$, and the predicted label is $\sign(w^Tx + b)$.

We first define the source distribution $P_0$:
\begin{equation} P_0(X = (1, 1) \land Y = 1) = 0.5 \end{equation}
\begin{equation} P_0(X = (-1, -1) \land Y = -1) = 0.5 \end{equation}
Consider the source classifier $w_0 = (0, 1)$.
The classifier classifies all examples correctly, in particular $\sign(w_0^T (1, 1)) = 1$, and $\sign(w_0^T (-1, -1)) = -1$.
In addition, the ramp loss is $0$, that is:
\begin{equation} \E_{X, Y \sim P_0}[\ramp(Y(w_0^T X))] = 0 \end{equation}
We now construct distributions $P_1$ and $P_2$:
\begin{equation} P_1(X = (1, 1/3) \land Y = 1) = 0.5 \end{equation}
\begin{equation} P_1(X = (-1, -1/3) \land Y = -1) = 0.5 \end{equation}
\begin{equation} P_2(X = (1, -1/3) \land Y = 1) = 0.5 \end{equation}
\begin{equation} P_2(X = (-1, 1/3) \land Y = -1) = 0.5 \end{equation}
Basically, the second-coordinate starts at 1 and decreases over time when the label is $Y = 1$, and starts at $-1$ and increases over time when the label is $Y=-1$.
We note that $\dist(P_0, P_1) = \dist(P_1, P_2) = \frac{2}{3} \leq \frac{1}{\reg}$.

Now, $w_0, b_0$ classifies everything incorrectly in $P_2$.
$\sign(w_0^T (1, -1/3)) = -1$, and $\sign(w_0^T (-1, 1/3)) = 1$ but the corresponding labels in $P_2$ are $1$ and $-1$ respectively.
Accordingly, the ramp loss $\rampL(M_{w_0, b_0}, P_2) = 1$.

Self-traning on $P_2$ cannot fix the problem.
$w_0, b_0$ gets every example incorrect, so all the pseudolabels are incorrect.
In particular, let $Y'$ be the pseudolabels produced using $w_0, b_0$---we have, $Y' \mid [X = (1, -1/3)] = -1$ and $Y' \mid [X = (-1, 1/3)] = 1$.
Self-training on this is now a convex optimization problem, which attains 0 loss, for example using the classifier $w' = (-1, 0)$, $b' = 0$, but any such classifier also gets all the examples incorrect.
Note that the max-margin classifier on the source also exhibits the same issue (that is, it can get all the examples wrong after the dataset shift), from a simple extension of this example.

Finally, the classifier $w^* = (1, 0)$, $b^* = 0$, gets every label correct in \emph{all distributions}, $P_0, P_1, P_2$.

\end{proof}

\newtheorem*{gradualSelfTrainTheorem}{Restatement of Theorem~\ref{thm:gradualSelfTrain}}

\begin{gradualSelfTrainTheorem}
\gradualSelfTrainTheoremText{}
\end{gradualSelfTrainTheorem}

We begin by stating and proving some lemmas that formalize the proof outline in the main paper.
We begin with a standard lemma that says if we learn a regularized linear classifier from $n$ labeled examples from a distribution $P$, then the classifier is almost as good as the optimal regularized linear classifier on $P$, and the classifier gets closer to optimal as $n$ increases. We bound the error of the classifier using the Rademacher complexity of regularized linear models $\linearmodels_{\reg}$.

\begin{lemma}
\label{lem:finiteSampleBound}
Given $n$ samples $S$ from a joint distribution $P$ over inputs $\xspace$ and labels $\yspace$, and suppose $\E_{X \sim P}[||X||_2^2] \leq B^2$. Let $\hat{f}$ and $f$ be the empirical and population minimizers of the ramp loss respectively:
\begin{equation} \hat{f} = \argmin_{f \in \linearmodels_{\reg}} \rampL(f, S) \end{equation}
\begin{equation} f^* = \argmin_{f \in \linearmodels_{\reg}} \rampL(f, P) \end{equation}
Then with probability at least $1 - \delta$,
\begin{equation}
\rampL(\hat{f}) - \rampL(f^*) \leq \frac{4B \reg + \sqrt{2 \log{2 / \delta}}}{\sqrt{n}}
\end{equation}
\end{lemma}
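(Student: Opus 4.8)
The plan is to prove this as a textbook excess-risk bound for empirical risk minimization, built from three standard ingredients: an ERM decomposition, a symmetrization/concentration step that controls the uniform deviation of empirical from population loss by a Rademacher complexity, and an explicit computation of that complexity for norm-bounded linear predictors under the ramp loss.

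First I would reduce the excess risk to a uniform deviation. Writing $\hat{f}$ and $f^*$ for the empirical and population minimizers, decompose $\rampL(\hat{f}, P) - \rampL(f^*, P)$ as the sum of $[\rampL(\hat{f},P)-\rampL(\hat{f},S)]$, $[\rampL(\hat{f},S)-\rampL(f^*,S)]$, and $[\rampL(f^*,S)-\rampL(f^*,P)]$. The middle bracket is $\le 0$ because $\hat{f}$ minimizes the empirical loss. For the last bracket, $f^*$ is a \emph{fixed} predictor, so $\rampL(f^*,S)$ is an average of i.i.d.\ $[0,1]$-valued random variables with mean $\rampL(f^*,P)$, and Hoeffding controls it by $\sqrt{\log(2/\delta)/(2n)}$ with probability $1-\delta/2$. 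The first bracket is bounded by the uniform deviation $\sup_{f \in \linearmodels_{\reg}} \big(\rampL(f,P)-\rampL(f,S)\big)$, which is the only place uniform convergence is needed.

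Second, I would bound that uniform deviation. Let $\mathcal{G} = \{(x,y)\mapsto \rampl(M_\theta(x),y) : \theta \in \linearmodels_{\reg}\}$ be the induced loss class; since $\ramp$ takes values in $[0,1]$, changing one sample moves the supremum by at most $1/n$, so McDiarmid's inequality gives, with probability $1-\delta/2$, that the supremum is at most its expectation plus $\sqrt{\log(2/\delta)/(2n)}$, and standard symmetrization bounds the expectation by a constant times $\mathcal{R}_n(\mathcal{G})$. A union bound over the two $\delta/2$ events then assembles the $\sqrt{2\log(2/\delta)}/\sqrt{n}$ additive term of the claimed bound. To evaluate $\mathcal{R}_n(\mathcal{G})$, I would peel off the loss using that $\ramp$ is $1$-Lipschitz (Talagrand's contraction principle), leaving the complexity of $\{(x,y)\mapsto y\,M_\theta(x)\}$; because each $y_i\in\{-1,1\}$ is fixed given the sample, $\sigma_i y_i$ is distributed as a Rademacher variable, so the labels drop out and the class collapses to the linear functions $\{x\mapsto w^\top x : \|w\|_2\le \reg\}$. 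Its empirical complexity is $\frac{\reg}{n}\,\E_\sigma\big\|\sum_i \sigma_i x_i\big\|_2$, which by Jensen and independence is at most $\frac{\reg}{n}\sqrt{\sum_i \|x_i\|_2^2}$; taking expectations and using $\E_{X\sim P}\|X\|_2^2\le B^2$ gives $\mathcal{R}_n \le \reg B/\sqrt{n}$. Tracking the factors of two through the contraction and symmetrization steps yields the $4B\reg/\sqrt{n}$ term.

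The step I expect to require the most care is the bias. Since $b$ ranges over all of $\mathbb{R}$, the symmetrized linear objective contains a term $b\sum_i \sigma_i y_i$ whose supremum over unbounded $b$ is infinite, so the reduction to the pure linear class above cannot be applied verbatim. The fix is to control the constant feature separately---for instance by folding the bias into a norm-bounded weight through an augmented coordinate, or by observing that the biases relevant to minimizing the ramp loss lie in a bounded range determined by $B$ and $\reg$. Everything else is the standard Rademacher pipeline, with the constants arranged exactly to produce the stated bound $\frac{4B\reg + \sqrt{2\log(2/\delta)}}{\sqrt{n}}$.
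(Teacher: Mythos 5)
Your proposal takes essentially the same route as the paper's proof: the paper invokes a cited textbook bound (excess risk $\le 4R_n + \sqrt{2\log(2/\delta)/n}$), then Talagrand's contraction using that the ramp loss is $1$-Lipschitz, then the standard $BR/\sqrt{n}$ Rademacher bound for $\ell_2$-bounded linear predictors; your write-up simply unpacks the first of these into its ERM decomposition, Hoeffding, and McDiarmid-plus-symmetrization components, with constants that come out at least as good. The one substantive point of difference is the bias term, and here your caution is warranted: the reduction to the homogeneous linear class $\{x \mapsto w^\top x : \|w\|_2 \le R\}$ indeed cannot be applied verbatim, since after contraction the symmetrized objective contains $b\sum_i \sigma_i y_i$, whose supremum over $b \in \mathbb{R}$ is infinite. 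Notably, the paper's own proof has exactly this issue and does not acknowledge it---it contracts to the class $\{(x,y)\mapsto y(w^\top x + b)\}$ with $b$ unbounded and then cites a complexity bound that holds only for the bias-free class---so your flag identifies a gap in the paper's argument rather than introducing one into yours. Carrying out one of your suggested fixes (restricting attention to the bounded range of intercepts relevant to minimizing the loss, or handling the constant direction separately; since the ramp loss takes values in $[0,1]$, the intercept direction contributes only an additional $O(1/\sqrt{n})$ term to the loss-class complexity) would make the argument airtight, though it may perturb the stated constant $4BR$ by a lower-order additive term---a detail the paper does not track either.
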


\begin{proof}
We begin with a standard bound (see e.g. Theorem 9, page 70 in~\cite{liang2016statistical}), where the generalization error on the left is bounded by the Rademacher complexity:
\begin{equation} \rampL(\hat{f}) - \rampL(f^*) \leq 4 R_n(A) + \sqrt{\frac{2 \log{2 / \delta}}{n}} \end{equation}
Here, $A = \{ (x, y) \mapsto \rampl(M_{\theta}(x), y) : \theta \in \Theta \}$ is the composition of the loss with the set of regularized linear models, and $R_n$ is the Rademacher complexity.
It now suffices to bound $R_n$.

We first use Talagrand's lemma, which says that if $\phi : \mathbb{R} \to \mathbb{R}^+$ is an $L$-Lipschitz function (that is, $|\phi(b) - \phi(a)| \leq L|b - a|$ for all $a, b$), then:
\begin{equation} R_n(\phi \circ F) \leq L R_n(F) \end{equation}
In our case, we let $F = \{ (x, y) \mapsto y M_{\theta}(x) : \theta \in \Theta \}$, in which case $A = \ramp \circ F$ where $\ramp$ is the ramp loss.
The Lipschitz constant of the ramp loss $\ramp$ is 1, so $R_n(A) \leq R_n(F)$.

Finally, we need to bound $R_n(F)$, the Rademacher complexity of $\ell_2$-regularized linear models. This is a standard argument (e.g. see Theorem 11, page 82 in~\cite{liang2016statistical}) and we get:
\begin{equation} R_n(F) \leq \frac{B \reg}{\sqrt{n}} \end{equation}
\end{proof}

The next lemma shows that the error (0-1 loss) of $M_{\theta}$ is low on $Q$, even though the margin loss may be high. Intuitively, $M_{\theta}$ classifies most points in $P$ correctly with geoemtric margin $\frac{1}{\reg}$, so after a small distribution shift $< \frac{1}{\reg}$, these points are still correctly classified since the margin acts as a `buffer' protecting us from misclassification.

\begin{lemma}
\label{lem:boundErrorFromMargin}
If $\theta \in \linearmodels_{\reg}$, $\rho(P, Q) = \rho < \frac{1}{\reg}$, and the marginals on $Y$ are the same so $P(Y) = Q(Y)$, then $\error(M_{\theta}, Q) \leq \frac{2}{1 - \rho \reg} \rampL(M_{\theta}, P)$ 
\end{lemma}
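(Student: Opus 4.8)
The plan is to exploit the geometric margin that the $\ell_2$ regularization enforces: a point that $M_\theta$ classifies correctly and confidently on $P$ cannot cross the decision boundary under a transport of size only $\rho < \frac{1}{\reg}$, so the points that \emph{are} misclassified on $Q$ must correspond, under the optimal transport, to points of $P$ with large ramp loss. I would run the argument one class at a time and then recombine using the \noLabShiftAssump{} assumption.

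First I would unpack $\dist(P,Q) = \rho$. By definition this means $\wasser(P_{X\mid Y=y}, Q_{X\mid Y=y}) \le \rho$ for each $y \in \yspace$, so for every $\eta > 0$ there is a map $f_y : \xspace \to \xspace$ with $f_{y\#} P_{X\mid Y=y} = Q_{X\mid Y=y}$ and $\sup_x \|f_y(x) - x\|_2 \le \rho + \eta$. Hence drawing $x' \sim P_{X\mid Y=y}$ and setting $x = f_y(x')$ realizes a coupling in which $x \sim Q_{X\mid Y=y}$ while $\|x - x'\|_2 \le \rho + \eta$. Working with the $P \to Q$ direction is deliberate: it lets me push expectations over $Q$ back to expectations over $P$ without ever needing an inverse transport.

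The key geometric step bounds how far the score can move. Since $M_{\theta}(x) = w^\top x + b$ with $\|w\|_2 \le \reg$, Cauchy--Schwarz gives $|M_\theta(f_y(x')) - M_\theta(x')| \le \reg(\rho + \eta)$. Suppose $M_\theta$ misclassifies the transported point, i.e. $\sign(M_\theta(f_y(x'))) \ne y$; this forces $y\,M_\theta(f_y(x')) \le 0$, and combined with the previous inequality it yields $y\,M_\theta(x') \le \reg(\rho+\eta)$. Because $\reg\rho < 1$, for small enough $\eta$ we have $\reg(\rho+\eta) < 1$, so the ramp value obeys $\ramp(y\,M_\theta(x')) \ge 1 - \reg(\rho+\eta)$. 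In words, every misclassified $Q$-point lies over a $P$-point carrying ramp loss at least $1-\reg\rho$.

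Finally I would integrate and recombine. Because $f_y$ is a pushforward, the per-class error satisfies $\error_y := \prob_{X\sim Q_{X\mid Y=y}}[\sign(M_\theta(X)) \ne y] = \prob_{x'\sim P_{X\mid Y=y}}[\sign(M_\theta(f_y(x'))) \ne y]$. Using that $\ramp \ge 1-\reg(\rho+\eta)$ on this event and $\ramp \ge 0$ elsewhere,
\[ \E_{x'\sim P_{X\mid Y=y}}\big[\ramp(y\,M_\theta(x'))\big] \ge \big(1-\reg(\rho+\eta)\big)\,\error_y, \]
and letting $\eta \to 0$ gives $\error_y \le \frac{1}{1-\reg\rho}\,\E_{x'\sim P_{X\mid Y=y}}[\ramp(y\,M_\theta(x'))]$. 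Summing over the two classes with the law of total probability and $P(Y)=Q(Y)$,
\[ \error(M_\theta, Q) = \sum_{y} Q(Y=y)\,\error_y \le \frac{1}{1-\reg\rho}\sum_{y} P(Y=y)\,\E_{x'\sim P_{X\mid Y=y}}[\ramp(y\,M_\theta(x'))] = \frac{\rampL(M_\theta,P)}{1-\reg\rho}, \]
which is certainly at most $\frac{2}{1-\reg\rho}\rampL(M_\theta,P)$, establishing the claim (in fact with the tighter constant $1$). The main obstacle here is conceptual rather than computational: correctly extracting a transport coupling from the (Monge) $\wasser$ definition, handling the infimum-not-attained issue via the $\eta$ slack, and keeping the per-class bookkeeping aligned so that the no-label-shift assumption can merge the two conditional bounds.
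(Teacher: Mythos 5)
Your proof is correct and takes essentially the same route as the paper's: per-class transport maps with an $\eta$-slack (since the Monge infimum need not be attained), Cauchy--Schwarz to bound the score perturbation by $\reg(\rho+\eta)$, a Markov-type inequality converting the induced small-margin event into a ramp-loss lower bound, and recombination of the two classes via $P(Y)=Q(Y)$ followed by $\eta \to 0$. Like the paper's own argument, you actually obtain the sharper constant $\frac{1}{1-\rho\reg}$, which of course implies the stated bound with $\frac{2}{1-\rho\reg}$.
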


\begin{proof}
Let $\theta = (w, b)$ be the weights and bias of the regularized linear model, with $||w||_2 \leq R$.

Intuitively, if the ramp loss for a regularized linear model is low, then most points are classified correctly with high geometric margin (distance to decision boundary). Formally, we first show (using basically Markov's inequality) that $P(Y(w^T X + b) \leq \rho \reg) \leq \frac{1}{1 - \rho \reg} \rampL(\theta, P)$, where we recall that $r : \mathbb{R} \to [0, 1]$ is the ramp loss which is bounded between $0$ and $1$:
\begin{align*}
\rampL(\theta, P) &= \E_{X, Y \sim P}[\ramp(Y(w^TX + b))] \\
&\geq \E_{X, Y \sim P}[\ramp(Y(w^TX + b)) \mathbb{I}_{Y(w^T X + b) \leq \rho \reg}] \\
&\geq \E_{X, Y \sim P}[(1 - \rho \reg) \mathbb{I}_{Y(w^T X + b) \leq \rho \reg}] \\
&= (1 - \rho \reg) P(Y(w^T X + b) \leq \rho \reg)
\end{align*}
Here, the inequality on the third line follows because if $Y(w^T X + b) \leq \rho \reg$ where $0 < \rho \reg \leq 1$, then $\ramp(Y(w^T X + b)) \geq 1 - \rho \reg$, from the definition of the ramp loss.

This gives us:
\begin{equation} \label{eqn:marginErrorBound} P(Y(w^T X + b) \leq \rho \reg) \leq \frac{1}{1 - \rho \reg} \rampL(\theta, P) \end{equation}

The high level intuition of the next step is that since the shift is small, only points $x, y$ with $y(w^T x + b) \leq \rho \reg$ can be misclassified after the distribution shift, and from the previous step since there aren't too many of these the error of $\theta$ on $Q$ is small.

Formally, fix $\epsilon > 0$ with $\rho + \epsilon < \frac{1}{R}$, and let $f_y : \xspace \to \xspace$ be a mapping such that for all measurable $A \subseteq \mathbb{R}^d$, $P(f^{-1}(A) \mid Y=y) = Q(A | Y=y)$, with $\sup_{x \in \xspace} ||f_y(x) - x||_2 \leq \rho + \epsilon$ for $y \in \{-1, 1\}$\footnote{We need the $\epsilon$ here because a mapping with exactly the $\wasser$ distance may not exist, although if they $P$ and $Q$ have densities then such a mapping does exist.}, then we have:
\begin{align*}
&\error(\theta, Q) \\
=\; &Q(Y \neq \mbox{sign}(w^T(X+b))) \\
=\; &Q(Y(w^T(X+b)) \leq 0) \\
=\; &Q(Y=1)Q(w^T X + b \leq 0 \mid Y = 1) \; + \\
&Q(Y=-1)Q(w^T X + b \geq 0 \mid Y = -1) \\
=\; &P(Y=1)P(w^T f_1(X) + b \leq 0 \mid Y = 1) \; + \\
&P(Y=-1)P(w^T f_{-1}(X) + b \geq 0 \mid Y = -1) \\
\leq\; &P(Y=1)P(w^T X + b \leq (\rho + \epsilon)R \mid Y = 1) \; + \\
&P(Y=-1)P(w^T X + b \geq -(\rho + \epsilon)R \mid Y = -1) \\
=\; &P(Y (w^T X + b) \leq (\rho + \epsilon)R) 
\end{align*}

Where the inequality follows from Cauchy-Schwarz:
\begin{align*}
|w^T X - w^T f_y(X)| &= |w^T(X - f_y(X))|  \\
&\leq ||w||_2 ||X - f_y(X)||_2 \\
&\leq R (\rho + \epsilon)
\end{align*}

Combining this with Equation~\eqref{eqn:marginErrorBound}, this gives us:
\begin{equation} \error(\theta, Q) \leq \frac{1}{1 - (\rho + \epsilon) \reg} \rampL(\theta, P) \end{equation}
Since $\epsilon > 0$ was arbitrary, by taking the infimum over all $\epsilon > 0$, we get:
\begin{equation} \error(\theta, Q) \leq \frac{1}{1 - \rho \reg} \rampL(\theta, P) \end{equation}
Which was what we wanted to show.

\end{proof}

From the previous lemma, $M_{\theta}$ has low error on $Q$, or in other words only occasionally mislabels examples from $Q$. The next lemma says that if we minimize the ramp loss on a distribution where the points are only occasionally mislabeled, then we learn a classifier with low (good) ramp loss as well.

\begin{lemma}
\label{lem:boundMarginFromError}
Given random variables $X, Y, Y'$ (defined on the same measure space) with joint distribution $P$, where $X$ denotes the distribution over inputs, and $Y, Y'$ denote distinct distributions over labels. If $P(Y \neq Y') \leq \beta$ then for any $\theta$, $\rampL(\theta, P_X P_{Y' \mid X}) \leq \rampL(\theta, P_X P_{Y \mid X}) + \beta$. Here $P_X P_{Y \mid X}$ denotes the distribution where the input $X$ is sampled from $P_X$ and then the label is sampled from $P_{Y \mid X}$.
\end{lemma}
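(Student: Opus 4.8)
The plan is to compare the two population ramp losses pointwise in the input $X$ and to exploit the fact that the ramp function takes values in $[0,1]$. Concretely, I would first write the difference of the two losses using linearity of expectation, keeping the same input distribution $P_X$ in both terms:
\begin{equation}
\rampL(\theta, P_X P_{Y' \mid X}) - \rampL(\theta, P_X P_{Y \mid X}) = \E\big[\ramp(Y' M_{\theta}(X)) - \ramp(Y M_{\theta}(X))\big].
\end{equation}

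Next I would split this expectation according to whether $Y = Y'$. On the event $\{Y = Y'\}$ the two terms in the integrand are identical and cancel, so only the event $\{Y \neq Y'\}$ contributes:
\begin{equation}
\rampL(\theta, P_X P_{Y' \mid X}) - \rampL(\theta, P_X P_{Y \mid X}) = \E\big[\big(\ramp(Y' M_{\theta}(X)) - \ramp(Y M_{\theta}(X))\big)\, \mathbb{I}_{Y \neq Y'}\big].
\end{equation}
On this event I would bound the bracketed difference pointwise by $1$: since $\ramp(\cdot) \in [0,1]$, the first term is at most $1$ and the second is at least $0$, so their difference is at most $1$. This gives
\begin{equation}
\rampL(\theta, P_X P_{Y' \mid X}) - \rampL(\theta, P_X P_{Y \mid X}) \leq \E[\mathbb{I}_{Y \neq Y'}] = P(Y \neq Y') \leq \beta,
\end{equation}
and rearranging yields the claimed inequality.

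The main obstacle: there is no genuinely difficult step here---the argument is essentially a one-line boundedness estimate, so the work is in identifying the right decomposition. The crucial point worth flagging is that the entire bound rests on the ramp function being bounded above by $1$: flipping a label changes the per-example loss by at most the range of the loss. Were we to use the unbounded hinge loss instead, a single flipped label could change the per-example loss arbitrarily, and no $\beta$-scaled bound would survive. This is exactly the robustness property that motivates using the ramp loss, and the same reason the hinge loss fails in Example~\ref{ex:hingeLossBad}.
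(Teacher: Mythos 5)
Your proof is correct and takes essentially the same route as the paper's: both decompose over the event $\{Y = Y'\}$ versus $\{Y \neq Y'\}$, use that the ramp loss lies in $[0,1]$ to bound the mismatch contribution by $P(Y \neq Y') \leq \beta$, and use non-negativity plus agreement of the losses on the matching event. The only cosmetic difference is that you bound the difference of the two expectations directly, whereas the paper bounds $\rampL(\theta, P_X P_{Y' \mid X})$ term by term and then swaps $Y'$ for $Y$ on the agreement event---the ingredients are identical, and your closing remark about why boundedness (and hence the ramp rather than hinge loss) is essential matches the paper's motivation exactly.
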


\begin{proof}
Let $\theta = (w, b)$. The proof is by algebra, where we recall that $r : \mathbb{R} \to [0, 1]$ is the ramp loss which is bounded between $0$ and $1$:
\begin{align*}
&\rampL(\theta, P_X P_{Y' \mid X}) \\
=\;&\E\Big[\ramp \big( Y'(w^TX + b) \big)\Big] \\
=\;& \E\Big[\ramp \big( Y'(w^TX + b) \big)\mathbb{I}_{Y = Y'}\Big] + \\
&\E\Big[\ramp \big( Y'(w^TX + b) \big) \mathbb{I}_{Y \neq Y'}\Big] \\
\leq\;& \E\Big[\ramp \big( Y'(w^TX + b) \big)\mathbb{I}_{Y = Y'}\Big] + \E\Big[\mathbb{I}_{Y \neq Y'}\Big] \\
=\;& \E\Big[\ramp \big( Y'(w^TX + b) \big)\mathbb{I}_{Y = Y'}\Big] + \beta \\
=\;& \E\Big[\ramp \big( Y(w^TX + b) \big)\mathbb{I}_{Y = Y'}\Big] + \beta \\
\leq\;& \E\Big[\ramp \big( Y(w^TX + b) \big)\Big] + \beta \\
=\;& \rampL(\theta,  P_X P_{Y \mid X}) + \beta
\end{align*}
\end{proof}

\begin{proof}[Proof of Theorem~\ref{thm:gradualSelfTrain}]
We begin by noting that there is some $\theta^* \in \linearmodels_{\reg}$ that gets low loss $\alpha^*$ on $Q$:
\begin{equation} \rampL(M_{\theta^*}, Q) = \alpha^* = \min_{\theta^* \in \linearmodels_{\reg}} \rampL(M_{\theta^*}, Q) \end{equation}

In self-training, we do not have access to labels from $Q$ so we use $M_{\theta}$ to pseudolabel examples $X$ from $Q$, so let $w, b = \theta$ and let $Y' \mid X = \sign(w^T X + b)$ be the pseudolabel distribution $Q_{Y' \mid X}$.

However, our pseudolabels are mostly correct.
That is, let $\beta = \frac{2}{1 - \rho \reg} \rampL(M_{\theta}, P)$.
Since the conditions of Lemma~\ref{lem:boundErrorFromMargin} are satisfied, $\error(M_{\theta}, Q) \leq \beta$.
This means that the pseudolabels from $M_{\theta}$ and the true labels on $Q$ mostly agree: $Q(Y \neq Y') \leq \beta$.
So by Lemma~\ref{lem:boundMarginFromError}, $\theta^*$, which attained low loss $\alpha^*$ on $Q$, also does fairly well on the pseudolabeled distribution $Q_X Q_{Y' \mid X}$, which denotes the distribution where the input $X$ is sampled from $Q_X$ and then the label is sampled from $Q_{Y' \mid X}$:
\begin{align}
\rampL(M_{\theta^*}, Q_X Q_{Y' \mid X}) &\leq \rampL(M_{\theta^*}, Q) + \beta \nonumber\\
&\leq \alpha^* + \beta
\end{align}

Since we have $n$ examples from $Q_X Q_{Y' \mid X}$, from Lemma~\ref{lem:finiteSampleBound} the empirical risk minimizer $\theta'$ on the $n$ examples satisfies:
\begin{align}
\label{eqn:ermOnPseudolabel}
\rampL(\theta', Q_X Q_{Y' \mid X}) \leq &\min_{\theta \in \linearmodels_{\reg}} \rampL(\theta, Q_X Q_{Y' \mid X}) \nonumber\\
&+\frac{4B \reg+ \sqrt{2 \log{2 / \delta}}}{\sqrt{n}}
\end{align}

But minimizing the loss on $Q_X Q_{Y' \mid X}$ explicitly gives us a lower loss than $\theta^*$ gets on $Q_X Q_{Y' \mid X}$ (recall that $\theta^*$ is the minimizer of the loss on $Q$ which is different):
\begin{align}
\label{eqn:eplicitlyMinPseudolabel}
\min_{\theta \in \linearmodels_{\reg}} \rampL(\theta, Q_X Q_{Y' \mid X}) &\leq \rampL(M_{\theta^*}, Q_X Q_{Y' \mid X}) \nonumber\\
&\leq \alpha^* + \beta
\end{align}

Combining Equations~\eqref{eqn:ermOnPseudolabel} and~\eqref{eqn:eplicitlyMinPseudolabel}, we get:
\begin{equation}
\rampL(\theta', Q_X Q_{Y' \mid X}) \leq \alpha^* + \beta + \frac{4B \reg+ \sqrt{2 \log{2 / \delta}}}{\sqrt{n}}
\end{equation}
This bounds the ramp loss of $\theta'$ on the pseudolabeled distribution $Q_X Q_{Y' \mid X}$---to convert this back to $Q$ we apply Lemma~\ref{lem:boundMarginFromError} again which we can since $Q(Y \neq Y') \leq \beta$, which gives us:
\begin{equation}
\rampL(\theta', Q) \leq \alpha^* + 2\beta + \frac{4B \reg+ \sqrt{2 \log{2 / \delta}}}{\sqrt{n}}
\end{equation}
This completes the proof.

\end{proof}

\newtheorem*{gradualSelfTrainCorollary}{Restatement of Corollary~\ref{cor:gradualSelfTrain}}

\begin{gradualSelfTrainCorollary}
\gradualSelfTrainCorollaryText{}
\end{gradualSelfTrainCorollary}

\begin{proof}
We begin with a classifier with loss $\alpha_0$.
Applying Theorem~\ref{thm:gradualSelfTrain} for each subsequent step of self-training, letting $\beta = \frac{2}{1 - \rho \reg}$, we get:
\begin{align}
\rampL(M_{\theta_{i+1}}, P_{i+1}) \leq &\beta\rampL(M_{\theta_i}, P_i) + \alpha^* \nonumber\\
&+ \frac{4B \reg + \sqrt{2 \log{2T / \delta}}}{\sqrt{n}}
\end{align}
Expanding, this becomes the sum of a geometric series. Noting that $\alpha^* \leq \alpha_0$, by using the formula for the sum of geometric series, we get:
\begin{equation} 
\rampL(M_{\theta_T}, P_T) \leq \beta^{T+1}\Big(\alpha_0 + \frac{4B \reg + \sqrt{2 \log{2T / \delta}}}{\sqrt{n}}\Big)
\end{equation}
\end{proof}

\newtheorem*{selfTrainingExponentialExample}{Restatement of Example~\ref{ex:selfTrainingExponential}}

\begin{selfTrainingExponentialExample}
\selfTrainingExponentialText{}
\end{selfTrainingExponentialExample}

\begin{proof}
The construction works even in 1-D. We will consider regularized linear models $\linearmodels_{\reg}$ with $\reg = 1$, so $\rho < \frac{1}{\reg}$. Such a model in 1D can be parametrized by 2 parameters, $w, b \in \mathbb{R}$ with $|w| \leq 1$, where the output of the linear model for an input $x \in \mathbb{R}$ is $wx + b$, and the label is $\sign(wx + b)$.

First we give intuition, and then we dive into the formal details of the construction.

We start with a classifier $\theta_0 = (w_0, b_0) = (1, 0)$.
We will construct the distributions so that the classifier $\theta_t = \theta_0$ for all $t$, that is, gradual self-training will not update the classifier.
In the initial distribution $P_0$, all the negative examples will be located at $x = -10$, so the classifier gets them correct and incurs 0 loss on them.
$\alpha_0$ fraction of the positive examples will be at $x = -0.1$, these examples are misclassified so the classifier incurs loss $\alpha_0$.
The rest of the positive examples will be at $x = 1$, and the classifier incurs $0$ loss on them.

In distribution $P_1$, $0.5 \alpha_0$ fraction of the positive examples will move from $x = 1$ to $x = 0.5$, but everything else stays the same as in $P_0$.
After pseudolabeling and self-training, the classifier still stays the same, that is $\theta_1 = \theta_0$.
This is because the $\alpha_0$ fraction of examples at $x=-0.1$ will be pseudolabeled negative, the $0.5\alpha_0$ fraction of examples at $x = 0.5$ pseudolabeled positive, and the remaining positive examples at $x=1$ will be pseudolabeled positive. 
Training on this pseudolabeled distribution gives us $\theta_1 = (w_1, b_1) = (1, 0)$ as the optimal parameters.

In $P_2$, the $0.5 \alpha_0$ fracton of points at $x = 0.5$ moves to $x = -0.1$.
After pseudolabeling and self-training, we still get $\theta_2 = \theta_0$.
At this point the classifier incurs loss $1.5 \alpha_0$.
We repeat this process, except for $P_3$, $\alpha_0$ fraction of the positive examples move from $x = 1$ to $x = 0.5$, and then the next time in $P_5$, $2 \alpha_0$ fraction of the positive examples move from $x = 1$ to $x = 0.5$, etc.
So in this way the loss grows exponentially.

We now give the formal construction, which works even in just 1 dimension.
First, we choose $S$ to be the maximum integer such that $(2^{S-1} + \frac{1}{2}) \alpha_0 < \frac{1}{2}$.
We have $S \geq 1$, because $(2^{1-1} + \frac{1}{2}) \alpha_0 = \frac{3}{2} \alpha_0 \leq \frac{3}{2} \frac{1}{4} < \frac{1}{2}$.

We now define a sequence of weights, which represents the fraction of points we move in each step as in the sketch above.
For $0 \leq i \leq S-1$, let $w_i = \frac{1}{2} 2^i \alpha_0$, and let $w_S = \frac{1}{2} - (2^{S-1} + \frac{1}{2}) \alpha_0$.
From the sum of geometric series, we can verify that each of these weights are positive, and the weights sum up to $\frac{1}{2}$.

We now define the distributions at each step, we case on whether the step is odd or even since as in the above high level sketch, it takes 2 steps to move a point from $x=1$ across to the other side of the decision boundary.
One subtlety is that unlike the sketch above, since we use the Monge form of the Wasserstein distance, we cannot have all the points exactly at $x=1$ but keep them separated by a small distance $\delta = \frac{1}{10S}$.
This is a technical detail, so on a first reading the reader may just pretend $\delta=0$ to work through the structure of the proof.

(Odd case) For $0 \leq t < \min(T, S+1)$, $P_{2t+1}$ is given by:
\begin{align}
&P_{2t+1}(x = -10 \land Y = -1) = 0.5 \\
&P_{2t+1}(x = -0.1 \land Y = 1) = \alpha_0 + \sum_{i=0}^{t-1} w_i \\
&P_{2t+1}(x = 0.5 \land Y = 1) = w_t \\
& P_{2t+1}(x = 1+i\delta \land Y = 1) = w_i \quad \forall t < i \leq S
\end{align}

(Even case) For $0 \leq t \leq \min(T, S+1)$, $P_{2t}$ is given by:
\begin{align}
&P_{2t}(x = -10 \land Y = -1) = 0.5 \\
&P_{2t}(x = -0.1 \land Y = 1) = \alpha_0 + \sum_{i=0}^{t-1} w_i \\
& P_{2t}(x = 1+i\delta \land Y = 1) = w_i \quad \forall t \leq i \leq S
\end{align}

If $T \geq S+1$, then for $2S+2 \leq i \leq 2T$, we set $P_i = P_{2S+2}$ (by this step the classifier will have reached ramp loss and error 0.5).

We can check that if $t < 2S$, then the classifier obtained from gradual self-training is $w_t = (1, 0)$ (the classifier does not change after self-training).
When $t = 2S$, $w_t = (1, -0.9)$, and finally if $2S < t$ then $w_t = (1, -1.5)$.
The edge case is because at the end all positive points are to the left of the classifier, so the classifier moves to the right.

Next we examine the loss values.
If $t \leq S$, the fraction of examples the classifier $w_{2t}$ gets wrong on $P_{2t}$ is:
\begin{equation} \alpha_0 + \sum_{i=0}^{t-1} w_i = (2^{t-1} + \frac{1}{2}) \alpha_0 \geq \frac{1}{2} 2^t \alpha_0 \end{equation}
If $t > S$, the fraction of examples the classifier $w_{2t}$ gets wrong on $P_{2t}$ is $0.5$.
The ramp loss is bounded below by the error rate, which means:
\begin{equation} \rampL(\theta', P_{2T}) \geq \min(0.5, \frac{1}{2} 2^T \alpha_0) \end{equation}
As desired.

We can verify that for every $i$, $\wasser(P_i, P_{i+1}) \leq 0.6 < 1 = \frac{1}{\reg}$, so these distributions satisfy the \gradShiftAssump{} assumption.
$P_i(Y = 1) = P_i(Y = -1) = 0.5$ for all $i$, so the distributions satisfy the \noLabShiftAssump{} assumption.
The classifier $(w, b) = (1, 5)$ gets $0$ loss on all $P_i$, so the distributions satisfy the \sepAssump{} assumption with $\alpha^* = 0$.
Finally, the data is all bounded in a constant region, between $x = -10$ and $x = 2$, so the distributions satisfy the \boundedAssump{} assumption.
\end{proof}

\newtheorem*{noRegularizationNoGainExample}{Restatement of Example~\ref{ex:noRegularizationNoGain}}

\begin{noRegularizationNoGainExample}
\noRegularizationNoGainText{}
\end{noRegularizationNoGainExample}

\begin{proof}
The proof is straightforward: scaling up the parameters of the original model $\theta$ gives us a $\theta'$ that gets $0$ loss (ramp or hinge) on the pseudolabeled distribution, but does not change the model predictions.
For simplicity, we focus on the ramp loss but the proof applies to the hinge loss as well.
Suppose $\theta = (w, b)$, where $w \in \mathbb{R}^d$ and $b \in \mathbb{R}$.

We choose our new parameters to be $\theta' = (\alpha w, \alpha b)$, where $\alpha \geq 1$ is a scaling factor we will choose.
Then we can write $L(\theta')$, the loss of $\theta'$ on the pseudolabeled examples $S$ as:
\begin{align}
L(\theta') &= \frac{1}{|S|} \sum_{x \in S} \rampl(M_{\theta'}(x), \sign(M_{\theta}(x))) \nonumber\\
&= \frac{1}{|S|} \sum_{x \in S} \ramp(\sign(w^T x + b) (\alpha w^T x + \alpha b)) \nonumber\\
&= \frac{1}{|S|} \sum_{x \in S} \ramp(\alpha \lvert w^T x + b \rvert )
\end{align}
Now, we can choose large enough $\alpha$ so that the term inside the $\ramp$ in the last line above is always $\geq 1$:
\begin{equation} \alpha = \frac{1}{\min_{x \in S}{\lvert w^Tx + b \rvert}} \end{equation}
So now, $\lvert (w')^Tx + b' \rvert = \alpha \lvert (w^Tx + b) \rvert \geq 1$ for all $x \in S$.
This gives us that $L(\theta') = 0$, since $\ramp(m) = 0$ for $m \geq 1$.
Note that this is true for the hinge loss as well, $\hinge(m) = 0$ for $m \geq 1$.
Since $L$ is bounded below by $0$, $\theta'$ is a minimizer of the loss on the pseudolabeled distribution (which is what self-training minimizes, see Equation~\eqref{eqn:selfTrainSample}).

Since $\theta'$ is just a scaled up version of $\theta$, it does not change the predictions:
\begin{equation} \sign(\alpha w^T x + \alpha b) = \sign(w^T x + b) \end{equation}

\end{proof}

\newtheorem*{softLabelsBadExample}{Restatement of Example~\ref{ex:softLabelsBad}}

\begin{softLabelsBadExample}
\softLabelsBadText{}
\end{softLabelsBadExample}

\begin{proof}
The reason for this is that the logistic loss is a proper scoring loss---if we fix $p$, the loss of $\logl(p, p')$ is minimized when $p' = p$. That is, if $0 \leq p, p' \leq 1$:
\begin{equation} \logl(p, p) \leq \logl(p, p') \end{equation}
So we have:
\begin{align}
\logL(\theta') &= \E[\logl(\sigma(M_{\theta}(X)), \sigma(M_{\theta'}(X)))] \nonumber\\
&\geq \E[\logl(\sigma(M_{\theta}(X)), \sigma(M_{\theta}(X)))] \nonumber\\
&= \logL(\theta)
\end{align}
\end{proof}

\newtheorem*{hingeLossBadExample}{Restatement of Example~\ref{ex:hingeLossBad}}

\begin{hingeLossBadExample}
\HingeLossBadText{}
\end{hingeLossBadExample}

\begin{proof}

We construct an example in 2D.
We consider the set of regularized linear models $\linearmodels_{\reg}$, where $\reg = 1$.
Such a classifier is parametrized by $(w, b)$ where $w \in \mathbb{R}^2$ with $||w||_2 \leq 1$, and $b \in \mathbb{R}$.
The output of the model is $M_{w, b}(x) = w^Tx+b$, and the predicted label is $\sign(w^Tx + b)$.

Set $\alpha_0 = \min(\frac{1}{2}, \frac{2\alpha}{3})$.
We will construct an example where the initial hinge error is $\leq \alpha_0$, but it increases to over $1$ and gets every example wrong, in 2 distribution shifts, even though there exists a single classifier with $0$ hinge loss across all the distributions.
Let $w_0 = (1,0)$ and $b_0 = 0$.
Consider a distribution $Q_{\delta}$, for $\delta \in \mathbb{R}$, defined as follows:
\begin{align}
&Q_{\delta}(Y = 1 \wedge X = (\delta, 1)) = \frac{1 - \alpha_0}{2} && [\text{Point 1}] \nonumber\\
&Q_{\delta}(Y = 1 \wedge X = (-\frac{1}{2}, \frac{1-\alpha_0}{\alpha_0})) = \frac{\alpha_0}{2} && [\text{Point 2}] \nonumber\\
&Q_{\delta}(Y = -1 \wedge X = (-\delta, -1)) = \frac{1 - \alpha_0}{2} && [\text{Point 3}] \nonumber\\
&Q_{\delta}(Y = -1 \wedge X = (\frac{1}{2}, -\frac{1-\alpha_0}{\alpha_0})) = \frac{\alpha_0}{2} && [\text{Point 4}] \nonumber
\end{align}
We will set $P_0 = Q_{1}$, $P_1 = Q_{1/3}$, and $P_2 = Q_{-1/3}$.
First, we note that the Wasserstein-infinity distance between any consecutive one of these is at most $2/3 < 1$.

Next, we can verify that $\hingeL(w_0, P_0) = \frac{3}{2}\alpha_0 \leq \alpha$.
In particular, $w_0$ gets points 2 and 4 incorrect, and points 1 and 3 correct with margin 1.
Computing the expectation of the loss, we get $\frac{3}{2}\alpha_0$.

Now the algorithm self-trains on $P_1$: $w_0$ pseudolabels points 1 and 4 positive ($y = 1$), and pseudolabels points 2 and 3 negative ($y = -1$), again getting points 2 and 4 incorrect.
From the KKT conditions, we can verify that the minimizer of the hinge loss on these pseudolabeled points is $w_1 = w_0$, and $b_2 = 0$.

Finally, the algorithm self-trains on $P_2$: here $w_0$ pseudolabels points 3 and 4 positive, and 1 and 2 negative.
That is, it gets all the examples wrong.
Self-training on these pseudolabels, the model still gets every example wrong (one solution is $w_2 = (0, -1)$ and $b_2 = 0$).
So $\error(w_2, P_2) = 1$, and the hinge loss is lower bounded by the error with $\hingeL(w_2, P_2) \geq \error(w_2, P_2)$.

On the other hand, the classifier $w^* = (0, 1)$ and $b^* = 0$, gets hinge loss $0$ on $P_1, P_2, P_3$.

\end{proof}

\newtheorem*{selfTrainingNoShiftBoundProp}{Restatement of Proposition~\ref{prop:selfTrainingNoShiftBound}}

\begin{selfTrainingNoShiftBoundProp}
\selfTrainingNoShiftBoundText{}
\end{selfTrainingNoShiftBoundProp}

\newcommand{\unlabeledL}{\ensuremath{U_r}}
We give intuition for our argument, and then dive into the formal proof.
Suppose we start out with a model that has ramp loss $\alpha_0$ on $P = P_0 = \cdots = P_T$.
After a single step of self-training, the loss can increase to $2\alpha_0$ on $P$.
So a naive argument leads to an exponential bound (since the loss is now $2\alpha_0$, it can increase to $2 \cdot 2\alpha_0$ after another round of self-training, etc, so after $T$ steps the loss on $P$ is bounded by $2^T \alpha_0$).
Showing a linear upper bound requires a more subtle argument that tracks some other invariants, and not just the loss value.

Roughly speaking, if the initial loss is below $\alpha_0$, there cannot be more than $\alpha_0$ fraction of points near the decision boundary.
We show that this invariant is maintained by self-training: the `number' of points near the decision boundary decreases, so it always stays below the initial value $\alpha_0$.
Finally, we show that if there are $\alpha_0$ points near the decision boundary, then self-training cannot increase the loss by more than $\alpha_0$ \emph{no matter what the current loss is}.
This shows that at each step the loss can only increase by $\alpha_0$.
Compare this with Example~\ref{ex:selfTrainingExponential}, where we do have distribution shift---in this case the `number' of points near the decision boundary can keep increasing which can lead to an exponential growth in the loss.

We now dive into the formal proof---we begin by making some definitions and stating and proving lemmas that formalize the above intuition.

In self-training, we pseudolabel an example $x$ with label $\sign(M_{\theta}(x))$.
We define the corresponding distribution on the pseudolabels $P_{Y \mid x, \theta}$ by $Y \mid x, \theta = \sign(M_{\theta}(x))$.

Recall that the loss of $\theta$ on labeled data is (where $\ramp$ is the ramp loss):
\begin{align}
\rampL(\theta, P) &= \E_{X, Y \sim P}[\rampl(M_{\theta}(X), Y)] \nonumber\\
&= \E_{X, Y \sim P}[\ramp(Y M_{\theta}(X))]
\end{align}

We define a loss on unlabeled data which corresponds to the loss of $\theta$ if every example was labeled by $M_{\theta}$.
This roughly corresponds to the `number' of points near the decision boundary, since points far from the decision boundary incur 0 loss, but points near the decision boundary incur a loss between 0 and 1.
Note that the unlabeled loss does not use the labels $Y$.
Letting $P_X$ denote the marginal distribution of $P$ on $X$, and $P_X P_{Y \mid X}$ denote the distribution where $X$ is sampled from $P_X$ and $Y$ is sampled from $P_{Y \mid X}$, the unlabeled loss $\unlabeledL$ is:

\begin{align}
\unlabeledL(\theta, P)
&= \rampL(\theta, P_X P_{Y \mid X, \theta}) \nonumber\\
&= \E_{X \sim P}[\rampl(M_{\theta}(X), \sign(M_{\theta}(X)) )] \nonumber\\
&= \E_{X \sim P}[\ramp(\lvert M_{\theta}(X) \rvert)] 
\end{align}

The unlabeled loss $\unlabeledL$ and labeled loss $\rampL$ are always defined since the ramp loss is bounded below by $0$.
A straightforward lemma shows that the unlabeled loss lower bounds the labeled loss.

\begin{lemma}[Lower bounds labeled loss]
\label{lem:unlabeled_lower_bounds_labeled}
The unlabeled loss lower bounds the labeled loss: $\unlabeledL(\theta, P) \leq \rampL(\theta, P)$.
\end{lemma}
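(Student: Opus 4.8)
The plan is to reduce the claim to a pointwise inequality and then integrate. Since both $\unlabeledL(\theta, P)$ and $\rampL(\theta, P)$ are expectations over the same distribution $P$---with the unlabeled loss simply ignoring the label $Y$---by monotonicity of the expectation it suffices to show that for every $(x, y)$ in the support of $P$,
\begin{equation}
\ramp(\lvert M_{\theta}(x) \rvert) \leq \ramp(y M_{\theta}(x)).
\end{equation}

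First I would observe that because $y \in \yspace$, we have $y M_{\theta}(x) \in \{ M_{\theta}(x), -M_{\theta}(x) \}$, and in either case $y M_{\theta}(x) \leq \lvert M_{\theta}(x) \rvert$. Next I would note that the ramp function $\ramp(m) = \min(\max(1 - m, 0), 1)$ is non-increasing in $m$: the hinge $\hinge(m) = \max(1 - m, 0)$ decreases as $m$ grows, and taking the minimum with the constant $1$ preserves this monotonicity. Combining these two facts gives the pointwise inequality above, since feeding a smaller-or-equal argument into a non-increasing function yields a larger-or-equal value.

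Taking the expectation over $X, Y \sim P$ on both sides then yields $\unlabeledL(\theta, P) \leq \rampL(\theta, P)$, since the left-hand side is exactly $\E_{X \sim P}[\ramp(\lvert M_{\theta}(X) \rvert)]$ and the right-hand side is $\E_{X, Y \sim P}[\ramp(Y M_{\theta}(X))]$. There is no genuine obstacle here: the only thing to verify is the monotonicity of $\ramp$, which is immediate from its definition, so the entire result is a one-line consequence of $y M_{\theta}(x) \leq \lvert M_{\theta}(x) \rvert$ together with $\ramp$ being non-increasing.
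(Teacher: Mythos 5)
Your proof is correct and follows essentially the same route as the paper's: both rest on the observation that $y M_{\theta}(x) \leq \lvert M_{\theta}(x) \rvert$ for $y \in \{-1,1\}$, the fact that $\ramp$ is non-increasing, and then taking expectations. No gaps to report.
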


\begin{proof}
Since $Y \in \{-1, 1\}$, 
\begin{equation}
\lvert M_{\theta}(X) \rvert = \lvert Y M_{\theta}(X) \rvert \geq Y M_{\theta}(X)
\end{equation}
Now, since $\ramp$ is a non-increasing function, we have:
\begin{equation}
\ramp(\lvert M_{\theta}(X) \rvert) \leq \ramp(Y M_{\theta}(X))
\end{equation}
Taking expectations on both sides:
\begin{equation}
\unlabeledL(\theta, P) \leq \rampL(\theta, P)
\end{equation}
\end{proof}

The next lemma shows that each step of self-training decreases the unlabeled loss.

\begin{lemma}[Unlabeled loss decreases]
\label{lem:unlabeled_loss_decreases}
If $\theta, \theta' \in \Theta$ and $\theta' = \selftrain(\theta, P)$, then $\unlabeledL(\theta', P) \leq \unlabeledL(\theta, P)$.
\end{lemma}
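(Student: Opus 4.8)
The plan is to recognize $\unlabeledL$ as a special case of the labeled ramp loss—namely the ramp loss on a \emph{self-pseudolabeled} distribution—and then chain together the optimality of self-training with Lemma~\ref{lem:unlabeled_lower_bounds_labeled}. The starting observation is that, directly from the definitions, relabeling every point by its own model's sign turns the unlabeled loss into a labeled ramp loss:
\begin{equation}
\unlabeledL(\theta, P) = \rampL(\theta, P_X P_{Y \mid X, \theta}).
\end{equation}
Since self-training with the ramp loss chooses $\theta' = \selftrain(\theta, P)$ to minimize $\rampL(\theta'', P_X P_{Y \mid X, \theta})$ over $\theta'' \in \Theta$—this is exactly the loss against the $\theta$-pseudolabels—and $\theta$ is itself feasible, the minimizer satisfies
\begin{equation}
\rampL(\theta', P_X P_{Y \mid X, \theta}) \leq \rampL(\theta, P_X P_{Y \mid X, \theta}) = \unlabeledL(\theta, P).
\end{equation}

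It then remains to pass from $\rampL(\theta', P_X P_{Y \mid X, \theta})$ back to $\unlabeledL(\theta', P)$, which uses a \emph{different} pseudolabeling (by $\theta'$ rather than $\theta$). The key point is that, for any fixed model, relabeling each point by that model's own sign can only lower the ramp loss, because $\lvert M_{\theta'}(X) \rvert \geq y\, M_{\theta'}(X)$ for every $y \in \yspace$ and $\ramp$ is non-increasing. This is precisely Lemma~\ref{lem:unlabeled_lower_bounds_labeled} applied to $\theta'$ on the distribution $P_X P_{Y \mid X, \theta}$; since the unlabeled loss depends only on the $X$-marginal, which is $P_X$, we obtain
\begin{equation}
\unlabeledL(\theta', P) = \unlabeledL(\theta', P_X P_{Y \mid X, \theta}) \leq \rampL(\theta', P_X P_{Y \mid X, \theta}).
\end{equation}
Chaining the two displays gives $\unlabeledL(\theta', P) \leq \unlabeledL(\theta, P)$, which is the claim.

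The argument is a short chain of inequalities with no hard technical step; the only care required is bookkeeping the two pseudolabelings and noting that self-training is \emph{by definition} the minimization of the ramp loss against the $\theta$-pseudolabels. The whole substance is captured by the optimality of the self-training minimizer (second display) and the self-labeling inequality inherited from Lemma~\ref{lem:unlabeled_lower_bounds_labeled} (third display), so I expect the proof to be essentially a two-line calculation once these two ingredients are lined up.
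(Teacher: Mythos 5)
Your proposal is correct and is essentially identical to the paper's proof: both arguments chain the same three facts, namely that the unlabeled loss depends only on the $X$-marginal (so $\unlabeledL(\theta', P) = \unlabeledL(\theta', P_X P_{Y \mid X, \theta})$), that Lemma~\ref{lem:unlabeled_lower_bounds_labeled} bounds this by $\rampL(\theta', P_X P_{Y \mid X, \theta})$, and that the self-training minimizer satisfies $\rampL(\theta', P_X P_{Y \mid X, \theta}) \leq \rampL(\theta, P_X P_{Y \mid X, \theta}) = \unlabeledL(\theta, P)$. The only difference is the order in which the inequalities are presented, which is immaterial.
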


\begin{proof}
Since the unlabeled loss does not depend on the labels, we have:
\begin{equation}
\unlabeledL(\theta', P) = \unlabeledL(\theta', P_X P_{Y \mid X, \theta})
\end{equation}
From Lemma~\ref{lem:unlabeled_lower_bounds_labeled}, the unlabeled loss lower bounds the labeled loss:
\begin{equation}
\unlabeledL(\theta', P_X P_{Y \mid X, \theta}) \leq \rampL(\theta', P_X P_{Y \mid X, \theta})
\end{equation}
But $P_X P_{Y \mid X, \theta}$ is the distribution of pseudolabels produced by model $\theta$, which is exactly what self-training ($\theta' = \selftrain(\theta, P)$) minimizes (recall the definition of self-training in Equation~\eqref{eqn:selfTrainPop}), so $\theta'$ has lower loss than $\theta$ on the pseudolabeled distribution:
\begin{equation}
\label{eqn:st_loss_bounds_unlabeled}
\rampL(\theta', P_X P_{Y \mid X, \theta}) \leq \rampL(\theta, P_X P_{Y \mid X, \theta}) = \unlabeledL(\theta, P)
\end{equation}
Which means that:
\begin{equation}
\unlabeledL(\theta', P) \leq \unlabeledL(\theta, P)
\end{equation}
\end{proof}

We now show a type of triangle inequality for the loss, which says that the loss of $\theta'$ on $P$ is upper bounded by the loss of $\theta'$ on pseudolabels from $\theta$ plus the loss of $\theta$ on $P$.

\begin{lemma}[Triangle Inequality]
\label{lem:unlabeled_triangle_inequality}
$\rampL(\theta', P) \leq \rampL(\theta', P_X P_{Y \mid X, \theta}) + \rampL(\theta, P)$
\end{lemma}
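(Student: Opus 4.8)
The plan is to prove this pointwise, keying everything off the single structural observation that the pseudolabel $Y' = \sign(M_{\theta}(X))$ disagrees with the true label $Y$ exactly on the set where $\theta$ misclassifies. I would start from $\rampL(\theta', P) = \E_{X, Y \sim P}[\ramp(Y M_{\theta'}(X))]$ and split the expectation over the events $\{Y = Y'\}$ and $\{Y \neq Y'\}$, bounding each region by one of the two terms on the right-hand side.

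On the agreement set $\{Y = Y'\}$ the integrand satisfies $\ramp(Y M_{\theta'}(X)) = \ramp(Y' M_{\theta'}(X))$ identically, so this contribution is at most $\E_{X \sim P}[\ramp(Y' M_{\theta'}(X))] = \rampL(\theta', P_X P_{Y \mid X, \theta})$, where the indicator is dropped using $\ramp \geq 0$. On the disagreement set $\{Y \neq Y'\}$ I would combine two facts: the ramp loss is bounded above by $1$, so $\ramp(Y M_{\theta'}(X)) \leq 1$; and on this set $Y M_{\theta}(X) \leq 0$, so that $\ramp(Y M_{\theta}(X)) = 1$. Hence $\ramp(Y M_{\theta'}(X)) \leq \ramp(Y M_{\theta}(X))$ there, and integrating (again dropping the indicator by nonnegativity) bounds this contribution by $\rampL(\theta, P)$. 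Adding the two pieces gives the inequality.

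A shorter route, which I would also flag, is to invoke Lemma~\ref{lem:boundMarginFromError} directly. The disagreement probability $P(Y \neq Y')$ equals $\error(M_{\theta}, P)$, and since the ramp loss dominates the $0$-$1$ loss (on every error point $\ramp = 1$) we have $P(Y \neq Y') \leq \rampL(\theta, P)$. Applying Lemma~\ref{lem:boundMarginFromError} with $\beta = \rampL(\theta, P)$, and matching the true label $Y$ to the lemma's shifted label and the pseudolabel $Y'$ to its base label, yields $\rampL(\theta', P_X P_{Y \mid X}) \leq \rampL(\theta', P_X P_{Y \mid X, \theta}) + \rampL(\theta, P)$ at once.

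This lemma is essentially routine, so there is no deep obstacle; the only points needing care are the sign convention at the boundary $M_{\theta}(X) = 0$ (confirming that a point with $Y \neq \sign(M_{\theta}(X))$ genuinely incurs ramp loss $1$ under $\theta$, which is what makes the per-point comparison $\ramp(Y M_{\theta'}(X)) \leq \ramp(Y M_{\theta}(X))$ valid on the disagreement set), the bookkeeping of dropping the indicator functions (legitimate precisely because the ramp loss is nonnegative), and, for the shortcut, getting the roles of the true and pseudo labels the right way round when citing Lemma~\ref{lem:boundMarginFromError}.
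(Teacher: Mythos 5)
Both routes you give are correct. The first is essentially the paper's own argument: the paper proves the pointwise inequality $\rampl(M_{\theta'}(x), y) \leq \max(\rampl(M_{\theta'}(x), \sign(M_{\theta}(x))), \rampl(M_{\theta}(x), y))$ by a sign case analysis---when the pseudolabel agrees with $y$ the two $\theta'$-losses coincide, and when the relevant signs disagree the right side equals $1$, which dominates the bounded ramp loss---and then takes expectations using $\max \leq$ sum; your split of the expectation into the events $\{Y = Y'\}$ and $\{Y \neq Y'\}$ is the same decomposition carried out under the integral rather than pointwise, and your handling of the $M_{\theta}(X) = 0$ boundary matches the paper's convention $\sign(0) = 1$. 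Your flagged shortcut, however, is a genuinely different derivation that the paper does not exploit: the paper proves Lemma~\ref{lem:boundMarginFromError} and this triangle inequality as two independent, self-contained arguments, whereas you observe that the triangle inequality is a corollary of Lemma~\ref{lem:boundMarginFromError} (whose hypothesis $P(Y \neq Y') \leq \beta$ is symmetric in the two label variables, so exchanging their roles is legitimate) once one notes $P(Y \neq Y') = \error(\theta, P) \leq \rampL(\theta, P)$, which holds because every misclassified point has $Y M_{\theta}(X) \leq 0$ and hence ramp loss exactly $1$. That unification is more economical and makes clear that the two lemmas are instances of a single fact; what the paper's standalone proof buys is only self-containedness, so either version would serve.
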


\begin{proof}

We will first show that for any $x$, $y$, $\theta$, $\theta'$:
\begin{align}
\rampl(M_{\theta'}(x), y) \leq \max(&\rampl(M_{\theta'}(x), \sign(M_{\theta}(x))), \nonumber\\
&\rampl(M_{\theta}(x), y))
\end{align}
We can prove this by casing. If $M_{\theta'}(x)$ and $M_{\theta}(x)$ have different signs, or $M_{\theta}(x)$ and $y$ have different signs, then the RHS is 1.
But the ramp loss is bounded above by 1, so the LHS has loss at most 1, which makes this statement true.
Otherwise, suppose $M_{\theta'}(x)$, $M_{\theta}(x)$, and $y$ all have the same signs---but then $\sign(M_{\theta}(x)) = y$, so $\rampl(M_{\theta'}(x), y) = \rampl(M_{\theta'}(x), \sign(M_{\theta}(x)))$.

With this in hand, the result follows with some algebra:
\begin{align}
&\rampL(\theta', P) \nonumber\\
= &\E[\rampl(M_{\theta'}(X), Y)] \nonumber\\
\leq & \E[\max(\rampl(M_{\theta'}(X), \sign(M_{\theta}(X))), \rampl(M_{\theta}(X), Y))] \nonumber\\
\leq & \E[\rampl(M_{\theta'}(X), \sign(M_{\theta}(X))) + \rampl(M_{\theta}(X), Y)] \nonumber\\
= &\rampL(\theta', P_X P_{Y \mid X, \theta}) + \rampL(\theta, P)
\end{align}
\end{proof}

Next, we show that if the unlabeled loss of $\theta$ is less than $\alpha$, then self-training cannot increase the loss by more than $\alpha$.

\begin{lemma}[Upper bounding loss growth]
\label{lem:loss_growth_upper_bound}
Suppose $\unlabeledL(\theta, P) \leq \alpha$ and let $\theta' = \selftrain(\theta, P)$. Then:
\[ \rampL(\theta', P) \leq \rampL(\theta, P) + \alpha \]
\end{lemma}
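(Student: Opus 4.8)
The plan is to combine the triangle inequality (Lemma~\ref{lem:unlabeled_triangle_inequality}) with the defining optimality property of self-training, so that the result drops out in one line. First I would apply Lemma~\ref{lem:unlabeled_triangle_inequality} with $\theta$ as the pseudolabeler to obtain
\[ \rampL(\theta', P) \leq \rampL(\theta', P_X P_{Y \mid X, \theta}) + \rampL(\theta, P). \]
The second term on the right is already exactly the term we want in the final bound, so the entire task reduces to showing that the first term, $\rampL(\theta', P_X P_{Y \mid X, \theta})$, is at most $\alpha$.

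The key observation is that $P_X P_{Y \mid X, \theta}$ is precisely the pseudolabeled distribution whose loss $\theta' = \selftrain(\theta, P)$ minimizes (Equation~\eqref{eqn:selfTrainPop}). Consequently $\theta'$ can only achieve a lower loss on this distribution than the pseudolabeler $\theta$ itself does, i.e.
\[ \rampL(\theta', P_X P_{Y \mid X, \theta}) \leq \rampL(\theta, P_X P_{Y \mid X, \theta}). \]
This is exactly the inequality already recorded as Equation~\eqref{eqn:st_loss_bounds_unlabeled} in the proof of Lemma~\ref{lem:unlabeled_loss_decreases}. By the very definition of the unlabeled loss we have $\rampL(\theta, P_X P_{Y \mid X, \theta}) = \unlabeledL(\theta, P)$, and the hypothesis of the lemma gives $\unlabeledL(\theta, P) \leq \alpha$. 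Chaining these three facts yields $\rampL(\theta', P_X P_{Y \mid X, \theta}) \leq \alpha$, which substituted into the triangle inequality above gives $\rampL(\theta', P) \leq \rampL(\theta, P) + \alpha$, the claim.

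There is no genuine obstacle here: the statement is a direct consequence of two already-established facts, so the proof is essentially a bookkeeping exercise. The only point that requires a moment's care is recognizing that the first term produced by the triangle inequality is literally the self-training objective evaluated at the minimizer $\theta'$; once this is seen, the minimization property of $\selftrain$ bounds it by the pseudolabeled loss of $\theta$, which is the unlabeled loss and hence at most $\alpha$ by assumption. No further estimation or Rademacher/margin argument is needed, in contrast with Theorem~\ref{thm:gradualSelfTrain}, because here there is no distribution shift and no finite-sample error to absorb.
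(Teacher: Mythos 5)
Your proposal is correct and matches the paper's own proof exactly: both apply Lemma~\ref{lem:unlabeled_triangle_inequality} and then bound $\rampL(\theta', P_X P_{Y \mid X, \theta}) \leq \unlabeledL(\theta, P) \leq \alpha$ via the minimization property of self-training, as recorded in Equation~\eqref{eqn:st_loss_bounds_unlabeled}. No gaps.
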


\begin{proof}
From Lemma~\ref{lem:unlabeled_triangle_inequality}, it suffices to show that $\rampL(\theta', P_X P_{Y \mid X, \theta}) \leq \alpha$. But as in Equation~\eqref{eqn:st_loss_bounds_unlabeled}, this is simply because $\theta'$ minimizes the pseudolabeled loss so we have:
\begin{equation}
\rampL(\theta', P_X P_{Y \mid X, \theta}) \leq \unlabeledL(\theta, P) \leq \alpha
\end{equation}
\end{proof}

The proof of Proposition~\ref{prop:selfTrainingNoShiftBound} now simply inductively applies Lemma~\ref{lem:unlabeled_loss_decreases} and Lemma~\ref{lem:loss_growth_upper_bound}.

\begin{proof}[Proof of Proposition~\ref{prop:selfTrainingNoShiftBound}]
Let $\theta_t = \selftrain(\theta_{t-1}, P_t)$ for $1 \leq t \leq T$.
The unlabeled loss lower bounds the labeled loss: that is, since $\rampL(\theta_0, P_0) \leq \alpha_0$, from Lemma~\ref{lem:unlabeled_lower_bounds_labeled}, $\unlabeledL(\theta_0, P_0) \leq \alpha_0$.
The unlabeled loss can only decrease with self-training: that is, inductively applying Lemma~\ref{lem:unlabeled_loss_decreases}, we get that for all $t$, $\unlabeledL(\theta_t, P_t) \leq \alpha_0$.
Then from Lemma~\ref{lem:loss_growth_upper_bound}, the loss can only increase by $\alpha_0$ at each step of self-training, so $\rampL(\theta_T, P_T) \leq \rampL(\theta_0, P_0) + \alpha_0 T \leq \alpha_0 (T + 1)$.
\end{proof}

The next Example shows that even without distribution shift, self-training can increase the loss of a model from $\alpha_0$ to nearly $2\alpha_0$.

\begin{example}
Even under the \sepAssump{} and \boundedAssump{} assumptions, for every $0.25 > \alpha_0 > \epsilon > 0$, there exists a model $\theta_0$ and distribution $P$ with $\rampL(\theta_0, P) \leq \alpha_0$ but $\rampL(\selftrain(\theta_0, P), P) \geq 2 \alpha_0 - \epsilon$. 
\end{example}

\begin{proof}
We give an example in 1D, where a linear model can be parametrized by 2 parameters, $w, b \in \mathbb{R}$ with $|w| \leq 1$, where the output of the linear model for an input $x \in \mathbb{R}$ is $wx + b$, and the label is $\sign(wx + b)$.

Let $\delta = \epsilon/3$ and $a = \alpha_0 / (1 + \delta)$.
Let the data distribution $P$ be given by:
\begin{equation} P(X = -10 \land Y = -1) = 0.5 \end{equation}
\begin{equation} P(X = 0 \land Y = 1) = a \end{equation}
\begin{equation} P(X = 1 \land Y = 1) = a - \delta \end{equation}
\begin{equation} P(X = 10 \land Y = 1) = 0.5 - 2a + \delta \end{equation}
Note that the probabilities are all non-negative and add up to $1$ and the data is bounded between $x=-10$ and $x=10$.

Let the initial model be $w_0 = 1$ and $b_0 = -\delta$.
The initial loss is $\rampL((w_0, b_0), P) = a + (a - \delta)\delta = \alpha_0 - \delta^2 \leq \alpha_0$.
We can check that after self-training, the updated parameters are $w_1 = 1$ and $b_1 = 1$.
The final loss is $\rampL((w_1, b_1), P) = 2a - \delta \geq 2\alpha_0(1 - \delta) - \delta \geq 2\alpha_0 - 3\delta = 2\alpha_0 - \epsilon$.

\end{proof}

\newpage

\section{Proofs for Section~\ref{sec:gaussian_theory}}
\label{sec:appendix_gaussian_theory}

We prove Theorem~\ref{thm:gaussian} in Section~\ref{sec:margin_theory}, following the sketch described in the paper.
Our first lemma shows that if $\mu$ does not change too much, then the optimal parameters $w^*(\mu)$ do not change too much either.

\begin{lemma}
\label{lem:lipschitzGaussian}
$w^*$ is $\frac{1}{B}$-Lipschitz, that is if $||\mu||_2, ||\mu'||_2 \geq B > 0$, then:
\begin{equation} ||w^*(\mu') - w^*(\mu)||_2 \leq \frac{1}{B} ||\mu' - \mu||_2 \end{equation}
\end{lemma}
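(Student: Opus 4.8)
The plan is to prove this purely algebraically rather than by bounding the Jacobian of the normalization map $w^*$. The tempting calculus argument---note that the Jacobian of $w^* $ at $\mu$ is $\frac{1}{\|\mu\|_2}\bigl(I - w^*(\mu)w^*(\mu)^{\top}\bigr)$, a scaled orthogonal projector of operator norm $\frac{1}{\|\mu\|_2} \leq \frac{1}{B}$, then integrate along the segment from $\mu$ to $\mu'$---does \emph{not} work, because the straight-line segment joining two points of norm at least $B$ can pass through the region where $\|\cdot\|_2 < B$, and there the Jacobian norm exceeds $\frac{1}{B}$. So I would avoid differentiation entirely.

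First I would reduce the claim to a scalar inequality. Write $a = \|\mu\|_2 \geq B$ and $b = \|\mu'\|_2 \geq B$, let $u = w^*(\mu)$ and $u' = w^*(\mu')$ be the two unit vectors, and set $c = \langle u, u' \rangle \in [-1,1]$. The left-hand side depends only on $c$ through $\|u' - u\|_2^2 = 2 - 2c$, while expanding $\mu = a u$, $\mu' = b u'$ gives $\|\mu' - \mu\|_2^2 = a^2 + b^2 - 2ab\,c$. Since both sides of the target inequality are nonnegative, squaring shows it is equivalent to
\begin{equation}
B^2(2 - 2c) \leq a^2 + b^2 - 2ab\,c
\end{equation}
for all $a, b \geq B$ and all $c \in [-1, 1]$.

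Next I would study the gap $g(a,b,c) = a^2 + b^2 - 2ab\,c - 2B^2 + 2B^2 c$ as a function of $c$ with $a,b$ fixed. It is affine in $c$ with slope $-2(ab - B^2)$, and the hypothesis $a, b \geq B$ forces $ab \geq B^2$, so the slope is nonpositive and $g$ is minimized over $c \in [-1,1]$ at the endpoint $c = 1$. There the $B^2$ terms cancel and $g(a,b,1) = a^2 + b^2 - 2ab = (a-b)^2 \geq 0$. Hence $g \geq 0$ throughout, which is exactly the displayed inequality, proving the bound.

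The only real subtlety is recognizing that the extremal case is $c = 1$, i.e. $\mu$ and $\mu'$ pointing in the same direction, which collapses the right-hand side to a perfect square; this is also where one sees the constant $\frac{1}{B}$ is sharp, since equality holds whenever $a = b = B$. Beyond choosing the algebraic route over the geometric one, I expect no obstacle.
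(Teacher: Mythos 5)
Your proof is correct, but it takes a genuinely different route from the paper's. The paper proceeds geometrically in two steps: it first proves the contraction inequality $\bigl\|\tfrac{v'}{\|v'\|_2} - v\bigr\|_2 \leq \|v' - v\|_2$ whenever $\|v'\|_2 \geq \|v\|_2 = 1$ (i.e., radially projecting the longer vector onto the unit sphere does not increase its distance to a unit vector), and then handles general $\mu, \mu'$ by assuming WLOG $\|\mu'\|_2 \geq \|\mu\|_2$, rescaling both vectors by $1/\|\mu\|_2$, and applying the contraction lemma, which yields the stronger bound $\frac{1}{\|\mu\|_2}\|\mu' - \mu\|_2 \leq \frac{1}{B}\|\mu' - \mu\|_2$. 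You instead collapse the whole claim to the scalar inequality $B^2(2-2c) \leq a^2 + b^2 - 2ab\,c$ and exploit that the gap is affine in $c$ with nonpositive slope $-2(ab - B^2)$, so the minimum sits at $c = 1$ where it equals $(a-b)^2 \geq 0$. Your route buys symmetry (no WLOG or case split between the two vectors), makes the sharpness of the constant $\frac{1}{B}$ visible at a glance (equality whenever $a = b = B$), and your opening observation that the naive Jacobian-integration argument fails---because the segment joining $\mu$ and $\mu'$ may pass through the ball of radius $B$ where the derivative bound degrades---is a worthwhile caution the paper does not discuss. What the paper's approach buys is a reusable geometric fact (radial projection onto the sphere is contractive from outside) and, as an intermediate product, the slightly stronger Lipschitz constant $\frac{1}{\max(\|\mu\|_2, \|\mu'\|_2)}$... actually $\frac{1}{\min(\|\mu\|_2,\|\mu'\|_2)}$, which your endpoint analysis does not directly expose.
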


\begin{proof}
Recall that $w^*(\mu) = \mu / ||\mu||_2$, which is well defined since $||\mu||_2 > 0$.
We will first prove that if $||v'||_2 \geq ||v||_2 = 1$, then the claim holds, that is:
\begin{equation} \label{eqn:lipschitzCase1} ||\frac{v'}{||v'||_2} - v||_2^2 \leq ||v' - v||_2^2 \end{equation}
Expanding both sides, this is equivalent to showing:
\begin{equation} 1 + ||v||_2^2 - \frac{2v^Tv'}{||v'||_2} \leq ||v'||_2^2 + ||v||_2^2 - 2v^Tv \end{equation}
Subtracting both sides by $||v||_2^2$, it suffices to show:
\begin{equation} 1 - \frac{2v^Tv'}{||v'||_2} \leq ||v'||_2^2 - 2v^Tv \end{equation}
But since $||v'|| \geq 1$, we can bound the LHS above if we multiply by $||v'||_2$:
\begin{align}
1 - \frac{2v^Tv'}{||v'||_2} &\leq ||v'||_2 - 2v^Tv' \nonumber\\
&\leq ||v'||_2^2 - 2v^Tv'
\end{align}
So Equation~\eqref{eqn:lipschitzCase1} is true.

Now we prove the main claim.
Without loss of generality, suppose $||\mu'||_2 \geq ||\mu||_2$, otherwise we can swap $\mu$ and $\mu'$.
Then we can scale $\mu'$ and reduce to the previous case:
\begin{align}
||w^*(\mu') - w^*(\mu)||_2 &= ||\frac{\mu'}{||\mu'||_2} - \frac{\mu}{||\mu||_2}||_2 \nonumber\\
&= ||\frac{\mu' / ||\mu||_2}{||\mu'||_2 / ||\mu||_2} - \frac{\mu}{||\mu||_2}||_2 \nonumber\\
&= ||\frac{(\mu' / ||\mu||_2)}{||(\mu' / ||\mu||_2)||_2} - \frac{\mu}{||\mu||_2}||_2 \nonumber\\
&\leq ||\frac{\mu'}{||\mu||_2} - \frac{\mu}{||\mu||_2}||_2 \nonumber\\
&= \frac{1}{||\mu||_2}||\mu' - \mu||_2 \nonumber\\
&\leq \frac{1}{B} ||\mu' - \mu||_2
\end{align}
Where in the inequality on the 4th line we applied Equation~\eqref{eqn:lipschitzCase1}. This completes the proof.

\end{proof}

We now state a standard lemma in measure theory, which says that if $f(x) \geq g(x)$ for all $x$, and the inequality is \emph{strict} on a set of non-zero measure (volume), then the integral of $f$ is strictly greater than the integral of $g$.

\begin{lemma}
\label{lem:basicExpectationBound}
Let $\mu$ be a measure on $\mathbb{R}^d$, and $C$ be measurable with $\mu(C) > 0$.
Suppose $f(x) > g(x)$ if $x \in C$, and $f(x) \geq g(x)$ for all $x \in \mathbb{R}^d$, where $f$ and $g$ are measurable functions with finite integrals.
Then:
\begin{equation} \int_{\mathbb{R}^d} f(X) d\mu > \int_{\mathbb{R}^d} g(X) d\mu \end{equation}
\end{lemma}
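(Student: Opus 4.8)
The plan is to reduce the claim to showing that the nonnegative function $h = f - g$ has strictly positive integral. Since $f$ and $g$ have finite integrals, the quantity $\int_{\mathbb{R}^d} f \, d\mu - \int_{\mathbb{R}^d} g \, d\mu$ is a well-defined finite number equal to $\int_{\mathbb{R}^d} h \, d\mu$, so it suffices to prove $\int_{\mathbb{R}^d} h \, d\mu > 0$. By hypothesis $h \geq 0$ everywhere and $h > 0$ on $C$, where $\mu(C) > 0$.

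The key step is to locate a subset of $C$ on which $h$ is uniformly bounded away from zero and which still carries positive measure. To this end I would exhaust $C$ by the increasing sequence of measurable sets $C_n = \{ x \in C : h(x) \geq 1/n \}$. Because $h > 0$ on $C$, every point of $C$ lies in some $C_n$, so $C_n \uparrow C$. Continuity of the measure from below then gives $\mu(C_n) \to \mu(C) > 0$, and hence $\mu(C_{n_0}) > 0$ for some fixed index $n_0$.

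With such an $n_0$ in hand, the conclusion follows from monotonicity of the integral: since $h \geq 0$ everywhere and $h \geq 1/n_0$ on $C_{n_0}$,
\[
\int_{\mathbb{R}^d} h \, d\mu \;\geq\; \int_{C_{n_0}} h \, d\mu \;\geq\; \frac{1}{n_0}\,\mu(C_{n_0}) \;>\; 0,
\]
which is exactly the strict inequality we need after substituting $h = f - g$.

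There is no serious obstacle in this argument; it is a routine measure-theoretic fact. The only two points requiring care are (i) the well-definedness of $\int f \, d\mu - \int g \, d\mu$, which is precisely why finiteness of both integrals is assumed (it rules out an indeterminate $\infty - \infty$), and (ii) the passage to a level set $C_{n_0}$ via continuity from below. The latter is essential: one cannot conclude positivity of the integral directly from the pointwise inequality $h > 0$ on a positive-measure set, because $h$ could approach $0$ too quickly; restricting to a set where $h \geq 1/n_0$ is what makes the lower bound effective.
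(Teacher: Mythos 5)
Your proof is correct. The paper states this lemma without proof, treating it as a standard measure-theoretic fact, so there is no paper argument to compare against; your route---setting $h = f - g$, exhausting $C$ by the level sets $C_n = \{x \in C : h(x) \geq 1/n\}$, using continuity from below to find some $C_{n_0}$ of positive measure, and lower-bounding $\int h \, d\mu \geq \frac{1}{n_0}\mu(C_{n_0}) > 0$---is exactly the standard argument, and the two points of care you flag (integrability ruling out $\infty - \infty$, and the necessity of passing to a level set rather than arguing directly from pointwise positivity) are the right ones.
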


Our next lemma is the key step of the proof.
We show that $w^*(\mu)$ is a strict local minimizer of $U(w, P_{\mu, \sigma})$, that is it has lower loss than any other $w$ nearby.

\begin{lemma}
\label{lem:localMinGaussian}
For all $w \in \mathbb{R}^d$ with $||w||_2 \leq 1$ and $||w - w^*(\mu)||_2 < 1$, with $w \neq w^*(\mu)$, we have:
\begin{equation} U(w^*(\mu), P_{\mu, \sigma}) < U(w, P_{\mu, \sigma}) \end{equation}
\end{lemma}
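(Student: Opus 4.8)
The plan is to collapse the $d$-dimensional problem to a one-dimensional one and then win by monotonicity. The starting observation is that $U(w,P_{\mu,\sigma})$ depends on $w$ only through the two scalars $m_w := w^\top\mu$ and $\tau_w := \sigma\|w\|_2$. Indeed, the marginal $P_{\mu,\sigma}(X)$ is a mixture of $\mathcal{N}(\mu,\sigma^2 I)$ and $\mathcal{N}(-\mu,\sigma^2 I)$; for $X$ from the class-$1$ component $w^\top X\sim\mathcal{N}(m_w,\tau_w^2)$ and for the class-$(-1)$ component $w^\top X\sim\mathcal{N}(-m_w,\tau_w^2)$, and since $\phi(|\cdot|)$ is even the two components contribute identically, so
\[
U(w,P_{\mu,\sigma}) = g(m_w,\tau_w), \qquad g(m,\tau) := \E_{Z\sim\mathcal{N}(0,1)}[\phi(|m+\tau Z|)],
\]
independently of the label proportion $\prob(Y=1)$. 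In particular $w^*=w^*(\mu)$ gives $m_{w^*}=\|\mu\|_2$ and $\tau_{w^*}=\sigma$.

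Next I would record two monotonicity facts about $g$. First, for fixed $\tau$ the map $m\mapsto g(m,\tau)$ is strictly decreasing on $m>0$: the folded normal $|m+\tau Z|$ is stochastically increasing in $m\ge 0$, since for $t>0$, $\tfrac{\partial}{\partial m}\prob(|m+\tau Z|>t)=\tfrac1\tau[\varphi(\tfrac{t-m}{\tau})-\varphi(\tfrac{t+m}{\tau})]>0$ (where $\varphi$ is the standard normal density and $|t-m|<t+m$); as $\phi$ is non-increasing this makes $g$ decrease, and strictly because $\phi$ strictly decreases on $[0,1]$ where $|m+\tau Z|$ carries positive mass (applying Lemma~\ref{lem:basicExpectationBound}). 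Second, along rays through the origin $g$ decreases as we scale up: $g(cm,c\tau)=\E_Z[\phi(c|m+\tau Z|)]$ is non-increasing in $c>0$ and, for $c<1$ versus $c=1$, strictly smaller at $c=1$, again via Lemma~\ref{lem:basicExpectationBound}. The geometric fact that makes these two properties sufficient is that $w^*$ simultaneously maximizes the signal-to-noise ratio $m_w/\tau_w=\sigma^{-1}w^\top\mu/\|w\|_2\le\sigma^{-1}\|\mu\|_2$ (Cauchy--Schwarz, with equality iff $w\parallel\mu$) and the norm $\|w\|_2\le 1$.

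To conclude, for feasible $w\neq w^*$ I would interpolate through $\tilde w := \|w\|_2\,w^*$, which has $\tau_{\tilde w}=\tau_w$ but $m_{\tilde w}=\|w\|_2\|\mu\|_2\ge w^\top\mu=m_w$. Then $g(m_w,\tau_w)\ge g(m_{\tilde w},\tau_w)$ by the first fact (same $\tau$, larger $m$), while $g(\tilde w)=g(\|w\|_2\,m_{w^*},\,\|w\|_2\,\tau_{w^*})\ge g(w^*)$ by the second fact (scaling by $c=\|w\|_2\le 1$). The constraint $\|w-w^*\|_2<1$ forces $m_w=w^\top\mu>\tfrac12\|w\|_2^2\|\mu\|_2\ge 0$, so every value of $m$ appearing above is positive and the strict-monotonicity statements apply. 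Strictness then follows from a case split: if $w\not\parallel\mu$ then Cauchy--Schwarz is strict, $m_w<m_{\tilde w}$, and the first inequality is strict; if $w\parallel\mu$ then $w=cw^*$ with $0<c<1$, so $\tilde w=w$ and the second inequality is strict. In either case $U(w^*,P_{\mu,\sigma})<U(w,P_{\mu,\sigma})$.

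The main obstacle I anticipate is the naive route of proving monotonicity of $g$ in $m$ and in $\tau$ \emph{separately}: monotonicity in the variance $\tau$ at fixed $m$ is genuinely delicate, since spreading out $|m+\tau Z|$ can raise or lower $\E[\phi(\cdot)]$ depending on the interplay of $m$, $\tau$, and the shape of $\phi$. The interpolation through $\tilde w$ is exactly what avoids this: it only ever varies $\tau$ jointly with $m$ along a ray (where the clean scaling monotonicity holds) and varies $m$ only at fixed $\tau$. The remaining work is purely the strictness bookkeeping, for which Lemma~\ref{lem:basicExpectationBound} is designed.
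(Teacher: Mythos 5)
Your proof is correct, and it takes a genuinely different route from the paper's. The paper works in $d$ dimensions throughout: after a scaling step (equivalent to your ray-monotonicity fact), it rotates coordinates so that $w$ and $w^*(\mu)$ span a plane, pairs each point $u=(x,y,z)$ with its reflection $u'=(-x,y,z)$ across their unit bisector, notes that the change in loss between $w^*$ and $w$ at $u$ is exactly the negative of the change at $u'$, and then shows the mixture density satisfies $p(u)>p(u')$ by strict convexity of $t\mapsto\exp(-t)$; integrating over a half-space with Lemma~\ref{lem:basicExpectationBound} yields the strict inequality. Your argument instead collapses the problem to one dimension: $U(w,P_{\mu,\sigma})$ depends on $w$ only through $(w^\top\mu,\,\sigma\|w\|_2)$, and the conclusion follows from two monotonicity facts (stochastic dominance of the folded normal in its location parameter at fixed scale, and monotonicity along rays through the origin), glued together by the interpolant $\tilde w=\|w\|_2\,w^*(\mu)$, with Cauchy--Schwarz supplying the strictness dichotomy. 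Your route makes the optimality of $w^*$ transparent, avoids the rotation-and-reflection bookkeeping, and---a genuine bonus---is manifestly independent of the class proportions $\prob(Y=1)$, whereas the paper's density formula implicitly assumes balanced classes. The paper's route, in exchange, avoids any computation with folded-normal distribution functions and isolates the symmetry/log-concavity structure that drives the result.

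One step deserves more care than your sketch gives it: in your first monotonicity fact, Lemma~\ref{lem:basicExpectationBound} does not apply as stated, because you are comparing integrals of the \emph{same} function $\phi$ against two \emph{different} measures (the laws of $|m+\tau Z|$ for two values of $m$), whereas that lemma compares two pointwise-ordered functions against a single measure. The standard patch is either (i) the monotone (quantile) coupling of the two folded normals, under which the dominated variable is a.s.\ no larger and, with positive probability, is strictly smaller while lying below $1$, so strict decrease of $\phi$ on $[0,1]$ gives a strict gap in expectation; or (ii) integration by parts, writing the difference of expectations as an integral of the survival-function gap (strictly positive for all $t>0$, by your derivative computation) against the measure $-d\phi$, which has positive mass on $(0,1]$ since $\phi$ strictly decreases there. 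Either way this is routine, so it is a presentational gap rather than a mathematical one; the rest of your argument checks out, including the observation that $\|w-w^*(\mu)\|_2<1$ forces $w^\top\mu>0$, which is exactly what rules out the degenerate and antipodal cases.
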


\begin{proof}
Denote $w^*(\mu)$ as $w^*$.
By Cauchy-Schwarz, since $\|w^*\|_2 = 1$ and $\|w - w^*\|_2 < 1$, we have $w \cdot w^* > 0$, and $\|w\|_2 > 0$.
This is because $w \cdot w^* = \|w^*\|_2 + (w - w^*) \cdot w^* \geq 1 - \|w - w^*\|_2 \|w^*\|_2 > 0$.
Since the dot product is non-zero, neither vector can be $0$.

We begin by noting that $U(w, P_{\mu, \sigma})$ is well-defined and finite: because $\phi(|X|)$ is between $0$ and $1$ so the expectation is well-defined with finite, non-negative value.

\textbf{Step 1 (Scaling Parameters)}: First, we show that scaling up the parameters decreases the loss: for any $w$ and $\lambda > 1$, $U(\lambda w, P_{\mu, \sigma}) < U(w, P_{\mu, \sigma})$.

Since $\phi$ is non-increasing, $\phi(|\lambda w^T x|) \geq \phi(|w^T x|)$.
Now, let $C = \{ x \in \mathbb{R}^d : 0 < \lambda w^Tx < 1 \}$.
Since $\phi$ is strictly decreasing on $[0, 1]$, for $x \in C$, $\phi(|\lambda w^T x|) < \phi(|w^T x|)$.
$P_{\mu, \sigma}(C) > 0$ (the Gaussian mixture distribution assigns positive probability to any set with non-zero volume / Lebesgue measure).
Then from Lemma~\ref{lem:basicExpectationBound}:
\begin{equation} \E_{X \sim P_{\mu, \sigma}}[\phi(|\lambda w^T X|)] < \E_{X \sim P_{\mu, \sigma}}[\phi(|w^T X|)] \end{equation}
Which is precisely saying $U(\lambda w, P_{\mu, \sigma}) < U(w, P_{\mu, \sigma})$.

This lets us assume, without loss of generality, that $||w||_2 = 1$ since scaling up $w$ strictly decreases the loss, and the theorem statement assumes $||w||_2 \leq 1$.

\textbf{Step 2 (Rotating parameters)}: Note that rotating the entire space does not change the loss values, formally if $A$ is a rotation matrix then:
\begin{equation} U(Aw, P_{A\mu, \sigma}) = U(w, P_{\mu, \sigma}) \end{equation}
So without loss of generality, we rotate the setup so that $w$ and $w^*$ lie on the $XY$ plane (except for the first two coordinates, all coordinates are $0$).
Let $v$ be the unit bisector of $w$ and $w^*$, given by $v = (w + w^*) / ||w + w^*||_2$.
Without loss of generality, rotate the setup so that $v$ is along the positive $Y$ axis (the second coordinate is $1$, and all other coordinates are $0$), and the first two coordinates of $w^*$ are positive. 
Let $\mu = (r, s, 0)$ where $0 \in \mathbb{R}^{d-2}$, we then have that $r, s > 0$ since $w^*$ and $\mu$ are in the same direction.

\textbf{Step 3 (Symmetry argument)}: Now consider any point $u = (x, y, z)$ with $z \in \mathbb{R}^{d-2}$, with $x, y > 0$.
Consider its reflection point around $v$, $u' = (-x, y, z)$.
Let $\Delta(w, w', u) = \phi(|(w')^Tu|) - \phi(|w^Tu|)$ denote the increase in loss on $x$ from using classifier $w'$ instead of $w$.
Now, from the way we constructed $u'$, $w^T u = (w^*)^T u'$, and $(w^*)^T u = w^T u'$.
So $\Delta(w, w^*, u) = - \Delta(w^*, w, u')$.
That is, as per our sketch, the loss for $u$ decreases when using $w^*$ instead of $w$, but increases for $u'$ when using $w^*$ instead of $w$, but the magnitudes of these two quantities are equal.

Next, we will show that the probability density is higher for $u$ than $u'$.
Let $p$ denote the density of $P_{\mu, \sigma}$.
$P_{\mu, \sigma}$ is the mixture of two Gaussians, so for normalizing constant $k > 0$, we have:
\begin{align} p(u) = k\Big[&\exp\Big(-\frac{1}{2\sigma} ((r - x)^2 + (s - y)^2 + z^2) \Big) + \nonumber\\
 &\exp\Big(-\frac{1}{2\sigma} ((r + x)^2 + (s + y)^2 + z^2) \Big) \Big]
\end{align}
\begin{align}
p(u') = k\Big[&\exp\Big(-\frac{1}{2\sigma} ((r + x)^2 + (s - y)^2 + z^2) \Big) + \nonumber\\
&\exp\Big(-\frac{1}{2\sigma} ((r - x)^2 + (s + y)^2 + z^2) \Big) \Big]
\end{align}

We now use strict convexity of $\exp(-x)$ to show that $p(u) > p(u')$.
Let $a = (r - x)^2 + (s - y)^2 + z^2$, $b = (r + x)^2 + (s + y)^2 + z^2$, $c = (r+x)^2 - (r - x)^2 = 4rx$. Since, $x, y, r, s > 0$, we have $0 < a < b$ and $0 < c < b-a$. Letting $f(x) = \exp(-x/(2\sigma))$ we can rewrite the above probabilities as:
\begin{equation} p(u) = k\Big[f(a) + f(b) \Big] \end{equation}
\begin{equation} p(u') = k\Big[f(a+c) + f(b-c) \Big] \end{equation}
Finally, we use strict convexity of $f(x) = \exp(-x)$ to show the desired result. Since $a < a + c < a + b$, for some $\alpha \in (0, 1)$, we can write:
\begin{equation} a + c = \alpha a + (1 - \alpha)b \end{equation}
\begin{equation} b - c = (1 - \alpha)a + \alpha b \end{equation}
Then, from strict convexity, we have:
\begin{equation} f(a + c) < \alpha f(a) + (1 - \alpha)f(b) \end{equation}
\begin{equation} f(b - c) < (1 - \alpha)f(a) + \alpha f(b) \end{equation}
Adding both of these, we get:
\begin{equation} f(a+c) + f(b-c) < f(a) + f(b) \end{equation}
That is, we have shown $p(u) > p(u')$.

The case when $u = (-x, -y, z)$, where $z \in \mathbb{R}^{d-2}$ and $x, y > 0$ is symmetric.
We ignore points $\{(x, y, z) : x = 0 \vee y = 0 \}$ since this has measure 0.

\textbf{Step 4 (Expectation)}:
We give intuition and then dive into the math.
For every pair of points in our pairing in Step 3, the contribution to the loss of $w^*$ is at most as high as the contribution to the loss of $w$.
So this trivially gives us $L(w^*, P_{\mu, \sigma}) \leq L(w, P_{\mu, \sigma})$, but we want a strict inequality.
However, we can find a set of points with non-zero volume (Lebesgue measure) where the contribution to the loss for $w^*$ is strictly less than for $w$, which completes the proof.

Formally, letting $S_{+} = \{ (x, y, z) : z \in \mathbb{R}^{d-2}, x > 0, y > 0 \}$, we can write (where we defined $\Delta$ in Step 3):
\begin{align}
&L(w, P_{\mu, \sigma}) - L(w^*, P_{\mu, \sigma}) \nonumber\\
=\;& 2\int_{S_{+}} \big[ p(u) \Delta(w^*, w, u) + p(u') \Delta(w^*, w, u') \big]
\end{align}
Where the $2$ comes from the fact that the case when $x, y < 0$ is symmetric and gives the same integral.
Now, let $C = \{ (x, y, z) : x > 0, y > 0, x^2 + y^2 \leq 1, z \in \mathbb{R}^{d-2}, w^* \}$ be a quarter cylinder.
The volume of $C$ is $> 0$, and $C \subseteq S_{+}$.
Further, for all $x \in C$, we have:
\begin{equation} p(u) \Delta(w^*, w, u) + p(u') \Delta(w^*, w, u') > 0 \end{equation}
So applying Lemma~\ref{lem:basicExpectationBound} again, we get:
\begin{equation} L(w, P_{\mu, \sigma}) - L(w^*, P_{\mu, \sigma}) > 0 \end{equation}
Which completes the proof.
\end{proof}

With these key lemmas, the proof of Theorem~\ref{thm:gaussian} is straightforward.

\newtheorem*{gaussianTheorem}{Restatement of Theorem~\ref{thm:gaussian}}

\begin{gaussianTheorem}
\gaussianTheoremText{}
\end{gaussianTheorem}

\begin{proof}
The proof reduces to showing the one-step case: for $0 < t \leq T$, if $\|w_{t-1} - w^*(\mu_{t-1})\|_2 \leq \frac{1}{4}$ then $w_t = w^*(\mu_t)$, where the $w_t$ is selected according to Equation~\eqref{eqn:constrained_min}. Applying this one-step result inductively gives us the desired result, that $w_T = w^*(\mu_T)$.

For the one-step case, from Lemma~\ref{lem:lipschitzGaussian}, since $||\mu_{t-1}||_2, ||\mu_t||_2 \geq B > 0$, $||w^*(\mu_t) - w^*(\mu_{t-1})||_2 \leq \frac{1}{B}||\mu_t - \mu_{t-1}||_2 \leq \frac{1}{B}\frac{B}{4} = \frac{1}{4}$.
Then by triangle inequality, since $||w_{t-1} - w^*(\mu_{t-1})||_2 \leq \frac{1}{4}$, we have $||w_{t-1} - w^*(\mu_t)||_2 \leq \frac{1}{2}$.
Further, $||w^*(\mu_t)||_2 \leq 1$, and by Lemma~\ref{lem:localMinGaussian}, any other $w$ satisfying $||w_{t-1} - w||_2 \leq \frac{1}{2} < 1$, $||w||_2 \leq 1$, and $w \neq w^*(\mu_t)$ satisfies $U(w^*(\mu_t), P_{\mu_t, \sigma_t}) < U(w, P_{\mu_t, \sigma_t})$.
So $w^*(\mu_t)$ is the unique minimizer in the constrained set, which means $w_t = w^*(\mu_t)$.
\end{proof}

\newpage
\newpage
\section{Experimental details for Section~\ref{sec:experiments}}
\label{sec:appendix_experiments}

In this section, we provide additional experimental details, and give results for ablations for the experiments in Section~\ref{sec:doesGradualShiftHelpExperiments}.
An advantage of gradual self-training is that it has a very small number of hyperparameters and we show that our findings are robust to different choices of these parameters---even if we do not do confidence thresholding, train every method for more iterations, and use a smaller window size, gradual self-training does better than self-training directly to the target and the other baselines.
For reproducibility, we provide all code but we also describe our datasets and models here.

\subsection{Datasets}

We ran experiments on 3 datasets:
\begin{enumerate}
\item \emph{Gaussian in $d=100$ dimensions}: We randomly select an initial mean and covariance for each of the two classes, and a final mean and covariance for each class, all in $d$ dimensions. Note that unlike in the theory in Section~\ref{sec:gaussian_theory}, each class can have a different (non diagonal) covariance matrix. The initial and final covariance matrices can also be different. The marginal probability of each class is the same, 0.5. We get labeled data sampled from a gaussian with the initial mean and covariance. For the intermediate domains, we linearly interpolate the means and covariances for each class between the initial and final, and sample points from a gaussian with the corresponding mean and covariance matrices. The number of labeled and unlabeled samples is on the order of d (as opposed to exponential in d, which importance weighting approaches would need). We provide more details next.

Details: We sample $\mu_0^{(-1)}, \mu_0^{(+1)}, \mu_T^{(-1)}, \mu_T^{(+1)}$ independently from $\normal(0, I)$ in $d$ dimensions. Since $d$ is high, these are all nearly orthogonal to each other. We then sample covariance matrices $\Sigma_0^{(-1)}, \Sigma_0^{(+1)}, \Sigma_T^{(-1)}, \Sigma_T^{(+1)}$ independently by sampling a diagonal matrix and rotation matrix (since the covariance matrices are PSD they decompose into $UDU^{\top}$ for rotation matrix $U$ and diagonal matrix $D$). We first sample a diagonal matrix $D$ in $d$ dimensions where each entry is uniformly random and independently sampled between min\_var and max\_var. Then, we sample a rotation matrix $U$ from the Haar distribution (which is a standard way to sample random orthogonal matrices), and then compose these to get $UDU^{\top}$.

At all times, we keep $\prob(Y = +1) = \prob(Y = -1) = 0.5$. We now sample $N$ labeled examples from the source domain, where $\prob(X | Y=1) = \normal(\mu_0^{(+1)}, \Sigma_0^{(+1)})$ and $\prob(X | Y=-1) = \normal(\mu_0^{(-1)}, \Sigma_0^{(-1)})$.
We sample $T$ unlabeled intermediate examples. For $y \in \{-1, 1\}$, let $\mu_t^{(y)} = (t / T) \mu_0^{(y)} + ((T - t) / T) \mu_T^{(y)}$ and $\Sigma_t^{(y)} = (t / T) \Sigma_0^{(y)} + ((T - t) / T) \Sigma_T^{(y)}$. We then sample $y_t \sim \mbox{Bern}(0.5)$, and $x_t \sim \normal(\mu_t^{(y)}, \Sigma_t^{(y)})$---the model only gets to see $x_t$ but not $y_t$. The unseen target images are sampled from the final means and covariances for each class, and we measure accuracy on these held out examples.

We use $N$ = 500 (500 labeled examples from the source), $T$ = 5000 (so 5000 unlabeled examples in total), and use min\_var=0.05, max\_var=0.1 (the standard-deviation is the square root of these).  We sample 1000 target examples to check accuracy.

\item \emph{Rotating MNIST}: We split the training data, consisting of 50,000 images, using the first $N_{\textup{src}} = 5000$ images as the source training set, next $N_{\textup{val}} = 1000$ images as source validation set, next $N_{\textup{inter}} = 42000$ images as unlabeled intermediate examples, and the final $N_{\textup{trg}} = 2000$ images as unseen target examples. We rotate each source image by an angle uniformly selected between 0 and 5 degrees. The $i$-th intermediate example is rotated by angle $5 + 55i / N_{\textup{inter}}$ degrees. Each target image is rotated by an angle uniformly selected between 55 degrees and 60 degrees.

\item \emph{Portraits}: A more realistic dataset where we do not control the structure of the shift, consisting of photos of high school seniors taken across many years. Additionally, there is label shift, that is the proportions of males and females, $\prob(Y)$, changes over time (see Figure~\ref{fig:portraits_gender_ratios}), unlike our theory which assumes that the probability of each label stays constant. We use the first 2000 images as source images. We shuffle these, and use 1000 for training, and 1000 for validation. We use the next 14000 images as unlabeled intermediate examples. Finally, we use the next 2000 images as unseen target examples. We downsample the images to 32x32 but do no other preprocessing. We reserve images at the end of the dataset as held-out examples for future work, and so that we can test how the method extrapolates past the point we validate on.
\end{enumerate}

\begin{figure}[t]
\begin{center}
\ifdefined\usearxivstyle
\centerline{\includegraphics[width=0.5\columnwidth]{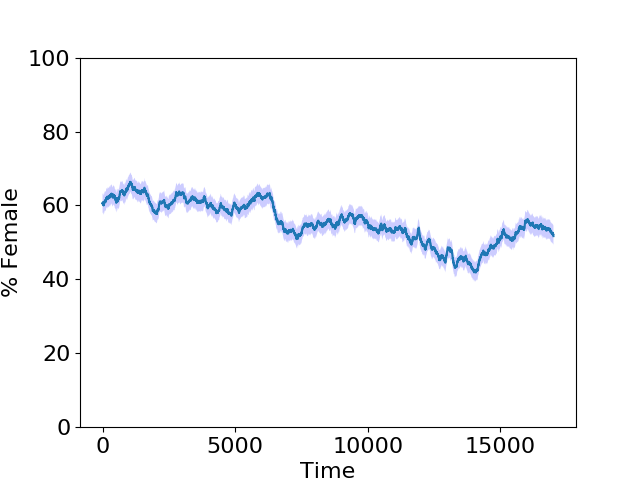}}
\else
\centerline{\includegraphics[width=\columnwidth]{images/portraits_gender_ratios.png}}
\fi
\caption{The plot shows a rolling average of the fraction of images that are female, over a window size of 1000, with 90\% confidence intervals. The plot suggests that the proportion of males and females changes over time, and is not constant---this label shift might make the task more challenging for self-training methods.}
\label{fig:portraits_gender_ratios}
\end{center}
\vskip -0.3in
\end{figure}

\subsection{Algorithm and baselines}

Next, we describe the gradual self-training algorithm and parameters in more detail.
Algorithm~\ref{alg:grad_self_training} shows pseudocode for gradual self-training.
filter\_low\_confidence filters out the $\alpha$ fraction of examples where the model is least confident, where confidence is measured as the maximum of the softmax output of the classifier.
This filtering is standard in many instances of self-training~\cite{xie2020selftraining}.

\begin{algorithm}[tb]
   \caption{Gradual Self-Training}
   \label{alg:grad_self_training}
\begin{algorithmic}
   \STATE {\bfseries Input:} Labeled source examples $S$, Intermediate unlabeled examples $I$, Window size $W$, Confidence threshold $\alpha \in (0, 1)$, Number of Epochs $n$, Regularized model $M$
   \STATE {\bfseries Assume:} $W$ divides $|I|$
   \STATE Train $M$ on $S$ for $n$ epochs
   \FOR{$t=1$ {\bfseries to} $|I|/W$}
   \STATE cur\_xs$ = I[(t-1)W:tW]$
   \STATE pseudolabeled\_ys$ = M$.predict\_labels$($cur\_xs$)$
   \STATE confident\_idxs$ = $filter\_low\_confidence$(M$, cur\_xs, $\alpha)$
   \STATE filtered\_xs = cur\_xs[confident\_idxs]
   \STATE filtered\_ys = pseudolabeled\_ys[confident\_idxs]
   \STATE Train $M$ on filtered\_xs, filtered\_ys for $n$ epochs
   \ENDFOR
\end{algorithmic}
\end{algorithm}

For the baselines---for target self-train, we self-train multiple times (iteratively) on the target. Each round of self-training uses the current model $M$ to pseudolabel examples in the target, and then trains on these pseudolabeled examples. Specifically, to make comparisons fair we self-train $|I| / W$ times on the target, so that the total number of self-training steps performed by the target self-train baseline and gradual self-training are the same. Similarly, when we self-train to all examples, we self-train multiple times on all unlabeled data, self-training $|I| / W$ times. Here $W$ is the window size in Algorithm~\ref{alg:grad_self_training} which is the number of examples in each intermediate domain.

Note that in the synthetic datasets (rotating MNIST and Gaussian) we ensure that target self-train gets access to the same number of unlabeled examples as gradual self-training does in total, to ensure that the improvements are not simply because gradual self-training consumes more unlabeled data (accumulated over all of the intermediate domains). For the real dataset (Portraits), we cannot generate additional examples for target self-train. However, this is why we also compare against self-training directly to all the unlabeled data, which gets access to exactly the same data that gradual self-training does but does not leverage the gradual structure.

\subsection{Models and parameter settings}

Next, we describe the models and parameter settings we used:
\begin{enumerate}

	\item \emph{Models}: For the Gaussian dataset we use a logistic regression classifier, with l2 regularization 0.02. For the MNIST and Portraits dataset, we use a 3 layer convolutional network. For each conv layer we use a filter size of 5x5, stride of 2x2, 32 output channels, and relu activation. We added dropout(0.5) after the final conv layer, and batchnorm after dropout. We flatten the final layer, and then apply a single linear layer to output logits (the number of logits is the number of classes in the dataset which is 10 for rotating MNIST and 2 for Portraits). We then take the softmax of the logits, and optimize the cross-entropy loss. We did not tune the model architecture for our experiments, however we checked that adding an extra layer, changing the number of output channels, and using a different architecture with an extra fully connected layer on top, have little impact on the results. 

	\item \emph{Parameters}: For the window size, we use $W = 500$ for the Gaussian dataset, and $W = 2000$ for the rotating MNIST and Portraits dataset. We use a smaller window for the Gaussian dataset because the data is lower dimensional and we have less unlabeled data. We train the model for 10 epochs, 20 epochs, and 100 epochs in each round for the rotating MNIST, Portraits, and Gaussian dataset respectively. These numbers were chosen on validation data on the source without examining the intermediate or target data, and we show an ablation which suggests that the results are not sensitive to these choices.

	\item \emph{Confidence thresholding}: We chose $\alpha = 0.1$ to filter out the 10\% least confident examples, since these are examples the model is not confident on, so the predicted label is less likely to be correct. We run an ablation without this filtering and see that all methods perform slightly worse, but the relative ordering is similar---gradual self-training is still significantly better than all the other methods.

\end{enumerate}

\subsection{Ablations}

We run ablations which suggest that the results in Section~\ref{sec:doesGradualShiftHelpExperiments} are robust to the choice of algorithm hyperparameters.

\textbf{Confidence thresholding}: Table~\ref{tab:confAblation} shows the results for rotating MNIST and Portraits without confidence thresholding. All methods do worse without confidence thresholding but gradual self-training does significantly better than the other methods.

\begin{table}[t]
\caption{
Classification accuracies for gradual self-train (ST) and baselines without confidence thresholding/filtering, with $90\%$ confidence intervals for the mean over 5 runs. All methods do worse without confidence thresholding but gradual self-training does significantly better than the other methods.
}
\label{tab:confAblation}
\vskip 0.15in
\begin{center}
\begin{small}
\begin{sc}
\begin{tabular}{lcccr}
\toprule
 & Rot MNIST & Portraits \\
\midrule
Source      & 30.5$\pm$1.0 & 76.2$\pm$0.5 \\
Target ST   & 31.1$\pm$1.4 & 76.9$\pm$1.3 \\
All ST      & 32.6$\pm$1.3 & 77.1$\pm$0.5 \\
Gradual ST  & \textbf{80.3$\pm$1.4} & \textbf{81.7$\pm$1.3} \\
\bottomrule
\end{tabular}
\end{sc}
\end{small}
\end{center}
\vskip -0.1in
\end{table}

\textbf{Window sizes}: Table~\ref{tab:smallerWindowAblation} shows the results for rotating MNIST and Portraits if we use smaller window sizes (from 2000 to 1000). Gradual self-training still does significantly better than the other methods.

\begin{table}[t]
\caption{
Classification accuracies for gradual self-train (ST) and baselines with smaller window sizes, with $90\%$ confidence intervals for the mean over 5 runs. Gradual self-training still does significantly better than the other methods.
}
\label{tab:smallerWindowAblation}
\vskip 0.15in
\begin{center}
\begin{small}
\begin{sc}
\begin{tabular}{lcccr}
\toprule
 & Rot MNIST & Portraits \\
\midrule
Source      & 35.6$\pm$1.7 & 74.1$\pm$1.4 \\
Target ST   & 36.0$\pm$1.5 & 77.9$\pm$1.4 \\
All ST      & 38.5$\pm$2.6 & 76.3$\pm$2.2 \\
Gradual ST  & \textbf{90.4$\pm$2.0} & \textbf{83.8$\pm$0.5} \\
\bottomrule
\end{tabular}
\end{sc}
\end{small}
\end{center}
\vskip -0.1in
\end{table}

\textbf{Additional ablations for Portraits}: We ran two additional ablations, focusing on Portraits. In the first ablation, we trained every method of self-training for 50\% more epochs. Over 5 trials, gradual self-training got an accuracy of $83.9 \pm 0.4\%$, target self-train got an accuracy of $80.7 \pm 1.1\%$, and self-training to all unlabeled examples got an accuracy of $79.6 \pm 2.2\%$. The non-adaptive baseline got an accuracy of $77.3 \pm 1.0 \%$.

We also ran an experiment on Portraits where we extrapolate further in time. Here we use the first 2000 images as source, next 20,000 images as unlabeled intermediate examples, and next 2000 images as the target. Here the accuracy of gradual self-training is $60.6 \pm 1.4\%$, self-training on the target directly is $56.5 \pm 1.4\%$, and self-training on all unlabeled data is $57.4 \pm 0.3\%$. Gradual self-training still does better, but all methods do quite poorly---developing and analyzing new techniques for gradual domain adaptation is an exciting avenue for future work.

\end{document}